\documentclass{article}

\PassOptionsToPackage{square,numbers,sort&compress}{natbib}
\usepackage[utf8]{inputenc} % allow utf-8 input
\usepackage[T1]{fontenc}    % use 8-bit T1 fonts
\usepackage{booktabs}       % professional-quality tables
\usepackage{amsfonts}       % blackboard math symbols
\usepackage{nicefrac}       % compact symbols for 1/2, etc.
\usepackage{microtype}      % microtypography
\usepackage{xcolor}         % colors
\usepackage{xspace}
\usepackage{graphicx}
\usepackage{wrapfig}
\usepackage{hyperref} 

\usepackage{amsmath, amsthm, amssymb}
\usepackage{algorithm,algorithmicx}
\usepackage{algpseudocode}
\usepackage{multirow} 
\usepackage{subcaption} 
\usepackage{caption}
\usepackage{float}  
\usepackage{algorithm}
\usepackage{algpseudocode}

\usepackage{enumitem}

\theoremstyle{plain}
\newtheorem{theorem}{Theorem}%[section]
\newtheorem{proposition}{Proposition}%[section]
\newtheorem{corollary}{Corollary}%[section]
\newtheorem{innercustomthm}{Theorem}
\newenvironment{customthm}[1]
  {\renewcommand\theinnercustomthm{#1}\innercustomthm}
  {\endinnercustomthm}

\newtheorem{innercustomcor}{Corollary}
\newenvironment{customcor}[1]
  {\renewcommand\theinnercustomcor{#1}\innercustomcor}
  {\endinnercustomcor}

\newtheorem{innercustomprop}{Proposition}
\newenvironment{customprop}[1]
  {\renewcommand\theinnercustomprop{#1}\innercustomprop}
  {\endinnercustomprop}

\newtheorem*{theorem*}{Theorem}
\newtheorem*{proposition*}{Proposition}
\newtheorem*{lemma*}{Lemma}
\newtheorem*{property*}{Property}
\newtheorem*{definition*}{Definition}
\newtheorem*{corollary*}{Corollary}

\theoremstyle{definition}

\newcommand{\vparagraph}[1]{
  {\noindent \textbf{#1}}
}

\usepackage[preprint]{lib/neurips_2026}

\title{Interference-Aware Multi-Task Unlearning}

\author{Ying-Hua Huang \\
National Taiwan University \\
\texttt{yhhuang@arbor.ee.ntu.edu.tw} \\
\And Rui Fang \\
National Taiwan University \\
\texttt{rfang@arbor.ee.ntu.edu.tw} \\
\And
Hsi-Wen Chen \\
National Taiwan University \\
\texttt{hwchen@arbor.ee.ntu.edu.tw} \\
\And
Ming-Syan Chen \\
National Taiwan University \\
\texttt{mschen@ntu.edu.tw} \\
}

\begin{document}

\maketitle
\begin{abstract}
Machine unlearning aims to remove the contribution of designated training data from a trained model while preserving performance on the remaining data. Existing work mainly focuses on single-task settings, whereas modern models often operate in multi-task setups with shared backbones, where removing supervision for one task or instance can unintentionally affect others. We introduce \textbf{multi-task unlearning} with two settings: \textit{full-task unlearning}, which removes a target instance from all tasks, and \textit{partial-task unlearning}, which removes supervision only from selected tasks. We show that shared parameters couple the forget and retain sets, causing \textit{task-level interference} on non-target tasks and \textit{instance-level interference} on other instances. To address this issue, we propose an interference-aware framework that combines task-aware gradient projection, which constrains updates within task-specific subspaces, with instance-level gradient orthogonalization, which reduces conflicts between forget and retain signals. Experiments on two multi-task computer vision benchmarks across five tasks show that our method achieves effective unlearning while maintaining strong generalization, reducing UIS compared with the strongest baseline by $30.3\%$ in full-task unlearning and $52.9\%$ in partial-task unlearning. 
% The source code is available at \url{https://anonymous.4open.science/r/MultiTaskUnlearning-4D44}.
\end{abstract}
\section{Introduction}

Machine unlearning~\cite{cao2015towards} has become increasingly important as modern machine learning systems are required to remove sensitive or outdated information from trained models. This need arises from privacy regulations such as the General Data Protection Regulation (GDPR)~\cite{gdpr2018general}, as well as broader concerns in security~\cite{huang2025survey}, fairness~\cite{zhang2024forgotten}, and robustness~\cite{qian2023towards}. Beyond regulatory requirements, machine unlearning also supports practical applications such as debiasing~\cite{chen2023fast}, debugging~\cite{surve2025explaining}, and auditing~\cite{wang2025tape}. Machine unlearning partitions the training data into two subsets: the \emph{forget set}, whose influence should be removed, and the \emph{retain set}, whose performance should be preserved. The goal is to eliminate the influence of the forget set while maintaining performance on the retain set.

Ideally, unlearning should match retraining on the retain set~\cite{bourtoule2021machine}. However, retraining from scratch is computationally expensive for large models, motivating efficient unlearning methods~\cite{graves2021amnesiac,guo2019certified,golatkar2020eternal,ding2024unified,cha2024towards}. Existing methods primarily focus on the \emph{single-task} setting, whereas modern models are typically built on pretrained backbones and adapted to multiple tasks through shared representations~\cite{mahabadi2021parameter,liu2022polyhistor} or parameter-efficient adapters~\cite{kamalesh2024unolora,xin2024vmt,agiza2024mtlora}. In such multi-task settings, removing supervision for one task may unintentionally affect others, introducing challenges absent from single-task unlearning.

Therefore, we propose the \textbf{multi-task unlearning} problem, where a single input instance may be associated with multiple tasks. As shown in Fig.~\ref{fig:overview}, we consider two complementary setups: \textbf{full-task unlearning}, which removes a target instance from all tasks, and \textbf{partial-task unlearning}, which removes supervision for a target instance only from selected tasks. For example, an image may be removed from person identification~\cite{choi2023towards} due to privacy requirements while retained for action recognition~\cite{kong2022human}. Similarly, a user's interaction may be removed from personalized recommendation~\cite{li2025survey} while retained for fraud detection~\cite{du2019lifelong}.

However, our preliminary experiments show that directly applying single-task unlearning methods to multi-task models leads to substantial performance degradation, with up to a 25\% drop on the retain set. We attribute this degradation to interactions between the forget set and retained data through shared parameters. These interactions induce two types of interference: \textbf{task-level interference}, where unlearning affects tasks outside the target set, and \textbf{instance-level interference}, where unlearning a target instance degrades performance on other instances.

Based on these observations, we propose a multi-task unlearning framework that mitigates interference across tasks while preserving performance on retained data.\footnote{An alternative is to use task-specific adapters and remove the corresponding module for unlearning. However, this does not guarantee complete forgetting, as task information may remain in the shared backbone. Moreover, maintaining separate adapters becomes costly as the number of tasks grows.} Our framework consists of two key components, as illustrated in the right panel of Fig.~\ref{fig:overview}. First, \textbf{Task-Aware Gradient Projection} constrains parameter updates to task-specific subspaces, reducing unintended interference in the shared representation. Second, \textbf{Instance-Level Gradient Orthogonalization} removes conflicting components between forget and retain gradients, preventing degradation on retained instances. Together, these components mitigate both task- and instance-level interference through subspace-constrained and conflict-aware updates.

\textbf{Our contributions are summarized as follows.}
First, we introduce the multi-task unlearning problem with two settings, full-task and partial-task unlearning, enabling fine-grained control over data removal for privacy and utility. Second, we identify task-level and instance-level interference as two sources of degradation, and propose an interference-aware framework that combines Task-Aware Gradient Projection and Instance-Level Gradient Orthogonalization to mitigate them. Finally, across two benchmarks and five tasks, our method outperforms six baselines, reducing UIS by $30.3\%$ in FU and $52.9\%$ in PU while preserving retained performance.
\begin{figure}[t]
    \centering
    \includegraphics[width=0.9\columnwidth]{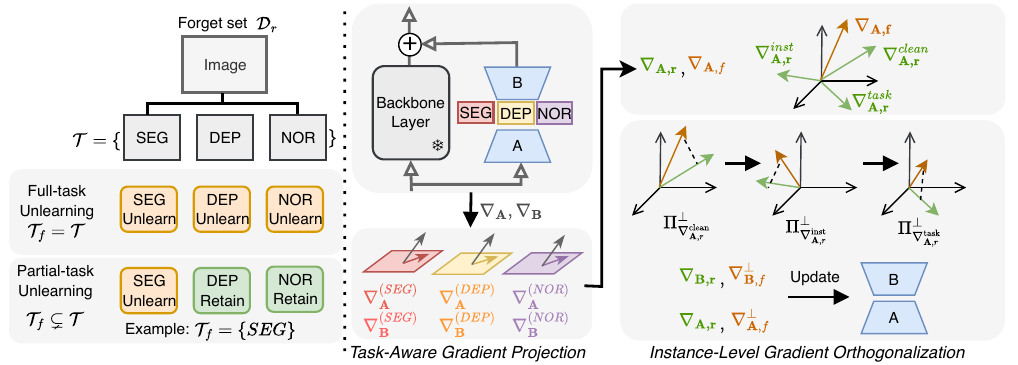}
    \caption{Overview of multi-task unlearning.}
    \label{fig:overview}
\end{figure}

\section{Problem Formulation}
\label{sec:problem_formulation}

In this paper, we study multi-task unlearning on a shared backbone, where each instance contains supervision for multiple tasks. Appendix~\ref{apx:notation} summarizes the key notations.

\paragraph{Multi-task Learning.}
Let $\mathcal{X}=\{\mathbf{x}_i\}^{N}_{i=1}$ denote the set of input instances such that $\mathbf{x}_i \in \mathbb{R}^{d}$, and let $\mathcal{T}=\{1,2,\dots,K\}$ denote the task set. For each instance $\mathbf{x}_i\in\mathcal{X}$ and task $t\in\mathcal{T}$, let $y_i^{(t)}$ denote the supervision signal for task $t$ on $\mathbf{x}_i$. The multi-task dataset is
$\mathcal{D} = \{(\mathbf{x}_i, t, y_i^{(t)}) \mid \mathbf{x}_i\in\mathcal{X},\ t\in\mathcal{T}\}$.
Accordingly, the loss for instance $\mathbf{x}_i$ on task $t$ is
\begingroup\small\begin{equation}
\ell_{i,t}(\theta)
:=
\ell_t\bigl(f_t(\mathbf{x}_i;\theta), y_i^{(t)}\bigr),
\end{equation}\endgroup
where $\ell_t$ is the task-specific loss, $f_t(\cdot;\theta)$ is the predictor for task $t$, and $\theta$ denotes the shared model parameters. Then, multi-task learning aims to optimize
\begingroup\small\begin{equation}
\theta^\star
=
\arg\min_{\theta}
\sum_{\mathbf{x}_i\in\mathcal{X}}\sum_{t\in\mathcal{T}}
\lambda_t\,\ell_{i,t}(\theta),
\end{equation}\endgroup
where $\lambda_t$ is the weight of task $t$. Because all tasks are learned through shared parameters $\theta^\star$, supervision from one task-instance pair can affect representations used by other tasks, making both learning and subsequent unlearning more challenging.

\paragraph{Multi-task Unlearning.}
Let $\mathcal{X}_f \subseteq \mathcal{X}$ denote the set of instances to be forgotten and $\mathcal{T}_f \subseteq \mathcal{T}$ denote the set of tasks whose supervision should be removed. The retained instance set and retained task set are defined as $\mathcal{X}_r = \mathcal{X} \setminus \mathcal{X}_f$ and $\mathcal{T}_r = \mathcal{T} \setminus \mathcal{T}_f$, respectively.

These two axes induce the following four partitions of the dataset:
\begingroup\small\begin{equation}
\begin{aligned}
\mathcal{D}_f
&= \{(\mathbf{x}_i, t, y_i^{(t)}) \mid \mathbf{x}_i\in\mathcal{X}_f,\ t\in\mathcal{T}_f\}, \quad
\mathcal{D}_r^{\mathrm{task}}
= \{(\mathbf{x}_i, t, y_i^{(t)}) \mid \mathbf{x}_i\in\mathcal{X}_f,\ t\in\mathcal{T}_r\}, \\
\mathcal{D}_r^{\mathrm{inst}}
&= \{(\mathbf{x}_i, t, y_i^{(t)}) \mid \mathbf{x}_i\in\mathcal{X}_r,\ t\in\mathcal{T}_f\}, \quad
\mathcal{D}_r^{\mathrm{clean}}
= \{(\mathbf{x}_i, t, y_i^{(t)}) \mid \mathbf{x}_i\in\mathcal{X}_r,\ t\in\mathcal{T}_r\}.
\end{aligned}
\end{equation}\endgroup

Here, $\mathcal{D}_f$ is the forget set, containing supervision on forgotten instances for forgotten tasks. The remaining three subsets are retained: $\mathcal{D}_r^{\mathrm{task}}$ keeps supervision on forgotten instances for retained tasks, $\mathcal{D}_r^{\mathrm{inst}}$ keeps supervision on retained instances for forgotten tasks, and $\mathcal{D}_r^{\mathrm{clean}}$ keeps supervision on retained instances for retained tasks. Accordingly, the retain set becomes
\begingroup\small\begin{equation}
\mathcal{D}_r
=
\mathcal{D}\setminus\mathcal{D}_f
=
\mathcal{D}_r^{\mathrm{task}}
\cup
\mathcal{D}_r^{\mathrm{inst}}
\cup
\mathcal{D}_r^{\mathrm{clean}}.
\end{equation}\endgroup
The objective is to remove the influence of $\mathcal{D}_f$ from $\theta^\star$ while preserving performance on $\mathcal{D}_r$.

Based on this formulation, we consider two practical scenarios of multi-task unlearning. When $\mathcal{T}_f=\mathcal{T}$, all supervision associated with the target instances is removed across all tasks, which we call \emph{full-task unlearning}. When $\mathcal{T}_f\subsetneq\mathcal{T}$, only supervision for a subset of tasks is removed, which we call \emph{partial-task unlearning}. This formulation provides fine-grained control, allowing an instance to be forgotten for selected tasks while retained for others.\footnote{While this paper focuses on removing supervision for selected instances, another possible setting removes supervision for one task across all instances, i.e., $\mathcal{X}_f=\mathcal{X}$, causing the model to forget that task capability entirely.}
\section{Theoretical Motivation: Interference in Multi-Task Unlearning}
\label{sec:theory}

Unlearning in the multi-task setting is particularly challenging because removing supervision from $\mathcal{D}_f$ may unintentionally affect retained supervision through shared model parameters. Specifically, \emph{task-level interference} arises when unlearning degrades performance on $\mathcal{D}_r^{\mathrm{task}}$, where forgotten instances should still be retained for non-target tasks. In contrast, \emph{instance-level interference} arises when unlearning degrades performance on $\mathcal{D}_r^{\mathrm{inst}}$, where other retained instances should still be preserved for the target tasks.

To formally characterize the interference induced by unlearning, we define the empirical losses on the retain and forget sets as
\begingroup\small\begin{equation}
L_r(\theta)
=
\frac{1}{|\mathcal{D}_r|}
\sum_{(\mathbf{x}_i,t,y_i^{(t)})\in\mathcal{D}_r}
\ell_{i,t}(\theta),
\qquad
L_f(\theta)
=
\frac{1}{|\mathcal{D}_f|}
\sum_{(\mathbf{x}_i,t,y_i^{(t)})\in\mathcal{D}_f}
\ell_{i,t}(\theta).
\end{equation}\endgroup
We then analyze the retrained model $\theta_r$, which represents the ideal solution obtained by unlearning $\mathcal{D}_f$ and retraining on $\mathcal{D}_r$.

\begin{theorem}
\label{thm:interference}
Assume that $L_r$ and $L_f$ are twice differentiable, $\mathbf{H}_r := \nabla^2 L_r(\theta_r)$ is invertible, and $|\mathcal{D}_r| \gg |\mathcal{D}_f|$. Let $\rho := |\mathcal{D}_f|/|\mathcal{D}_r|$, and suppose that $\theta^\star$ locally minimizes $L_r(\theta)+\rho L_f(\theta)$. Then, removing $\mathcal{D}_f$ induces the following loss change for any retained task-instance pair $(\mathbf{x}_i,t,y_i^{(t)})\in\mathcal{D}_r$:
\begingroup\small
\begin{equation*}
\ell_{i,t}(\theta_r)-\ell_{i,t}(\theta^\star)
=
\underbrace{
\rho\,
\nabla \ell_{i,t}(\theta_r)^\top
\mathbf{H}_r^{-1}
\nabla L_f(\theta_r)
}_{\text{first-order}}
+
\underbrace{
O(\rho^2)
}_{\text{higher-order}}.
\end{equation*}
\endgroup
\end{theorem}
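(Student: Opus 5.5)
The plan is a standard influence-function argument: expand the first-order optimality condition of $\theta^\star$ around the retrained solution $\theta_r$, solve for the displacement $\theta^\star-\theta_r$ to first order in $\rho$, and then Taylor-expand the per-example loss $\ell_{i,t}$ along that displacement. Throughout, $\theta_r$ is taken to be a local minimizer of $L_r$, so $\nabla L_r(\theta_r)=0$. I also assume that $\theta^\star$ lies in a neighborhood of $\theta_r$ that shrinks as $\rho\to 0$; this is what the hypothesis $|\mathcal{D}_r|\gg|\mathcal{D}_f|$ provides.

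\textbf{Step 1: the parameter displacement.} Introduce the perturbed objective $\theta(\epsilon):=\arg\min_\theta L_r(\theta)+\epsilon L_f(\theta)$, taken locally near $\theta_r$, so that $\theta(0)=\theta_r$ and $\theta(\rho)=\theta^\star$. Its stationarity condition is $F(\theta,\epsilon):=\nabla L_r(\theta)+\epsilon\nabla L_f(\theta)=0$. We have $F(\theta_r,0)=0$ and $\partial_\theta F(\theta_r,0)=\mathbf{H}_r$, which is invertible. The implicit function theorem therefore gives a locally unique differentiable branch $\epsilon\mapsto\theta(\epsilon)$ with
\[
\theta'(0)=-\mathbf{H}_r^{-1}\nabla L_f(\theta_r).
\]
Hence
\[
\theta^\star-\theta_r=-\rho\,\mathbf{H}_r^{-1}\nabla L_f(\theta_r)+O(\rho^2).
\]

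\textbf{Step 2: the loss change.} Taylor-expand $\ell_{i,t}$ at $\theta_r$:
\[
\ell_{i,t}(\theta^\star)=\ell_{i,t}(\theta_r)+\nabla\ell_{i,t}(\theta_r)^\top(\theta^\star-\theta_r)+O(\|\theta^\star-\theta_r\|^2).
\]
Substitute the displacement from Step 1. Since $\|\theta^\star-\theta_r\|=O(\rho)$, the remainder is $O(\rho^2)$. Rearranging gives
\[
\ell_{i,t}(\theta_r)-\ell_{i,t}(\theta^\star)=\rho\,\nabla\ell_{i,t}(\theta_r)^\top\mathbf{H}_r^{-1}\nabla L_f(\theta_r)+O(\rho^2),
\]
which is the claimed identity. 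Two points need checking. First, the sign: the minus sign in the displacement turns into a plus once the difference is written as $\ell(\theta_r)-\ell(\theta^\star)$. Second, the choice of expansion point: because we expand at $\theta_r$ rather than at $\theta^\star$, the gradients appear evaluated at $\theta_r$, exactly as in the statement.

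\textbf{Main obstacle.} The delicate step is making the $O(\rho^2)$ terms rigorous. The implicit function theorem needs $F$ to be $C^1$ in $\theta$. For a quantitative $O(\rho^2)$ bound, rather than just $o(\rho)$, we want $\theta(\cdot)$ to be $C^2$, or at least $\theta'$ to be Lipschitz. That requires Lipschitz Hessians of $L_r$ and $L_f$, slightly more than twice differentiability. I would state this mild regularity, e.g.\ $C^2$ losses with Lipschitz Hessians near $\theta_r$, as implicit in the informal statement.

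A second subtlety is identifying the given local minimizer $\theta^\star$ with the IFT branch at $\epsilon=\rho$. This follows from uniqueness of the branch in a neighborhood, provided $\theta^\star$ lies in that neighborhood, which is the $\rho\ll 1$ assumption. If one prefers to avoid the IFT, an alternative is a direct argument: expand $\nabla L_r(\theta^\star)$ around $\theta_r$, use $\nabla L_r(\theta^\star)=-\rho\nabla L_f(\theta^\star)$, replace $\nabla L_f(\theta^\star)$ by $\nabla L_f(\theta_r)+O(\|\theta^\star-\theta_r\|)$, and invert $\mathbf{H}_r$. This first yields $\|\theta^\star-\theta_r\|=O(\rho)$ and then the first-order formula.
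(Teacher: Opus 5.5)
Your proposal is correct and follows essentially the same route as the paper. The paper obtains $\theta_r-\theta^\star=\rho\,\mathbf{H}_r^{-1}\nabla L_f(\theta_r)+O(\rho^2)$ by the direct expansion of $\nabla L_r(\theta^\star)+\rho\,\nabla L_f(\theta^\star)=0$ around $\theta_r$ (the alternative you list), then Taylor-expands $\ell_{i,t}$ at $\theta_r$ as in your Step~2; your implicit-function-theorem version, with its explicit Lipschitz-Hessian and locality assumptions, makes rigorous the step $\|\theta^\star-\theta_r\|=O(\rho)$ that the paper takes for granted.
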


Theorem~\ref{thm:interference} shows that joint interference in multi-task unlearning arises from Hessian-preconditioned gradient coupling between the forget set and retained data. Removing $\mathcal{D}_f$ induces a parameter shift governed by $\mathbf{H}_r^{-1}$, which reflects the local curvature of the retain loss. Since $|\mathcal{D}_r| \gg |\mathcal{D}_f|$ implies $\rho \ll 1$, the higher-order term $O(\rho^2)$ becomes negligible, and the first-order term captures the dominant source of interference.

These observations immediately yield the following corollary.

\begin{corollary}
\label{cor:interference}
Task-level and instance-level interference correspond to aggregations over $\mathcal{D}_r^{\mathrm{task}}$ and $\mathcal{D}_r^{\mathrm{inst}}$, respectively, and are both governed by the same first-order term in Theorem~\ref{thm:interference}.
\end{corollary}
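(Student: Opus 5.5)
The corollary follows from Theorem~\ref{thm:interference} by linearity, once the two notions of interference are written as sums of per-pair loss changes. I will define task-level interference as the aggregate retained-loss change over the retain subset
\[
\Delta^{\mathrm{task}} := \frac{1}{|\mathcal{D}_r^{\mathrm{task}}|}\sum_{(\mathbf{x}_i,t,y_i^{(t)})\in\mathcal{D}_r^{\mathrm{task}}}\bigl(\ell_{i,t}(\theta_r)-\ell_{i,t}(\theta^\star)\bigr).
\]
Instance-level interference $\Delta^{\mathrm{inst}}$ is defined in the same way over $\mathcal{D}_r^{\mathrm{inst}}$. Any other nonnegative weighting, such as the task weights $\lambda_t$, works identically. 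These definitions only formalize the verbal descriptions given at the start of Section~\ref{sec:theory}, namely that task-level interference is degradation on $\mathcal{D}_r^{\mathrm{task}}$ and instance-level interference is degradation on $\mathcal{D}_r^{\mathrm{inst}}$.

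\textbf{Step 1.} Both $\mathcal{D}_r^{\mathrm{task}}$ and $\mathcal{D}_r^{\mathrm{inst}}$ are subsets of $\mathcal{D}_r$ by the partition of $\mathcal{D}_r$ above. Hence Theorem~\ref{thm:interference} applies to every pair in either set, giving
\[
\ell_{i,t}(\theta_r)-\ell_{i,t}(\theta^\star)=\rho\,\nabla\ell_{i,t}(\theta_r)^\top\mathbf{H}_r^{-1}\nabla L_f(\theta_r)+O(\rho^2).
\]

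\textbf{Step 2.} I average over each subset and pull the common factor $\mathbf{H}_r^{-1}\nabla L_f(\theta_r)$ out of the sum. This gives
\[
\Delta^{S}=\rho\,\nabla L_S(\theta_r)^\top\mathbf{H}_r^{-1}\nabla L_f(\theta_r)+O(\rho^2),\qquad S\in\{\mathrm{task},\mathrm{inst}\},
\]
where $L_S$ is the average loss over the corresponding subset. Both interferences are therefore governed by the same bilinear form $u\mapsto \rho\,u^\top\mathbf{H}_r^{-1}\nabla L_f(\theta_r)$, evaluated at the aggregated retain gradient of the respective subset. The only difference between them is which retained gradient enters the form: that of forgotten instances on retained tasks, or that of retained instances on forgotten tasks.

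\textbf{Main obstacle.} The one delicate point is that the $O(\rho^2)$ remainders must stay $O(\rho^2)$ after aggregation. A finite average of $O(\rho^2)$ terms is $O(\rho^2)$ provided the constants are uniform over pairs. For a finite dataset this holds automatically by taking the maximum constant over the finitely many pairs. Alternatively, if Theorem~\ref{thm:interference}'s proof bounds the remainder through $\|\theta_r-\theta^\star\|^2=O(\rho^2)$ together with uniformly bounded second derivatives of the $\ell_{i,t}$ near $\theta_r$, uniformity follows directly. I will state this uniformity explicitly, and then conclude that the first-order term dominates both aggregates for $\rho\ll1$, exactly as it does for a single pair in the theorem.
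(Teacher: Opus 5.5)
Your proposal is correct and follows the paper's proof: apply Theorem~\ref{thm:interference} to each pair in $\mathcal{D}_r^{\mathrm{task}}$ and $\mathcal{D}_r^{\mathrm{inst}}$ and aggregate. The paper sums, so its remainder is $O(|\mathcal{S}|\rho^2)$, while you average, so yours stays $O(\rho^2)$; you also state explicitly the uniformity of the remainder constants that the paper leaves implicit. One small correction to your alternative uniformity argument: the per-pair remainder in the theorem's proof also contains the term $\nabla\ell_{i,t}(\theta_r)^\top O(\rho^2)$, coming from the second-order error in the parameter shift $\theta_r-\theta^\star$, so that argument needs uniformly bounded per-pair gradients as well as uniformly bounded second derivatives.
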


Corollary~\ref{cor:interference} shows that the same first-order term governs both forms of interference along two axes. Aggregation over $\mathcal{D}_r^{\mathrm{task}}$ captures how unlearning affects forgotten instances on retained tasks, corresponding to task-level interference. Aggregation over $\mathcal{D}_r^{\mathrm{inst}}$ captures how unlearning affects retained instances on forgotten tasks, corresponding to instance-level interference.

Moreover, since the retraining-consistent parameter shift is shaped by the retain-set curvature through $\mathbf{H}_r^{-1}$, directly updating the model based solely on the unlearning gradient can be suboptimal. The following proposition formalizes this observation.

\begin{proposition}
\label{prop:suboptimal}
Under the local quadratic approximation
$L_r(\theta_r+\delta)\approx L_r(\theta_r)+\frac{1}{2}\delta^\top \mathbf{H}_r\delta$, assume that $\mathbf{H}_r$ is positive definite. Consider updates $\delta$ that achieve a fixed first-order unlearning effect, $\nabla L_f(\theta_r)^\top \delta = \gamma$, for some desired level $\gamma>0$. Then, the unique retain-loss-minimizing update is
\begingroup\small\begin{equation*}
\delta^\star
=
\frac{\gamma}
{\nabla L_f(\theta_r)^\top \mathbf{H}_r^{-1}\nabla L_f(\theta_r)}
\mathbf{H}_r^{-1}\nabla L_f(\theta_r).
\end{equation*}\endgroup
Hence, directly following the unlearning gradient, i.e., $\delta=\alpha \nabla L_f(\theta_r)$, is generally suboptimal unless $\nabla L_f(\theta_r)$ is an eigenvector of $\mathbf{H}_r$.
\end{proposition}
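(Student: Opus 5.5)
The plan is to treat the statement as an equality-constrained convex quadratic program. Writing $g := \nabla L_f(\theta_r)$, the local quadratic approximation reduces the problem to
\begin{equation*}
\min_{\delta}\ \tfrac{1}{2}\delta^\top \mathbf{H}_r\delta \quad \text{subject to} \quad g^\top\delta=\gamma .
\end{equation*}
The constant $L_r(\theta_r)$ is irrelevant to the minimizer. We implicitly need $g\neq 0$; otherwise the constraint with $\gamma>0$ is infeasible, and I would state this as a standing nondegeneracy assumption.

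First, I would form the Lagrangian $\tfrac12\delta^\top\mathbf{H}_r\delta-\mu(g^\top\delta-\gamma)$. Stationarity gives $\mathbf{H}_r\delta=\mu g$, hence $\delta=\mu\mathbf{H}_r^{-1}g$. Substituting into the constraint gives $\mu\, g^\top\mathbf{H}_r^{-1}g=\gamma$. The denominator is strictly positive because $\mathbf{H}_r^{-1}$ is positive definite and $g\neq0$. Solving for $\mu$ yields exactly the claimed $\delta^\star$.

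For uniqueness and global optimality, I would avoid relying only on KKT sufficiency and use a direct completion argument. Let $\delta$ be any feasible point and set $\delta=\delta^\star+\eta$, so that $g^\top\eta=0$. Then
\begin{equation*}
\tfrac12\delta^\top\mathbf{H}_r\delta=\tfrac12\delta^{\star\top}\mathbf{H}_r\delta^\star+\mu\, g^\top\eta+\tfrac12\eta^\top\mathbf{H}_r\eta .
\end{equation*}
This uses $\mathbf{H}_r\delta^\star=\mu g$. The middle term vanishes because $g^\top\eta=0$, and the last term is strictly positive unless $\eta=0$, by positive definiteness. Hence $\delta^\star$ is the unique minimizer. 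Equivalently, one can view this as Cauchy--Schwarz in the $\mathbf{H}_r$-inner product.

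For the suboptimality claim, the only scaling of the raw gradient that is feasible is $\delta=\alpha g$ with $\alpha=\gamma/\|g\|^2$. By uniqueness, this choice is optimal iff $\alpha g=\delta^\star$. That condition holds iff $\mathbf{H}_r^{-1}g$ is parallel to $g$, i.e.\ $\mathbf{H}_r g=\lambda g$ for some $\lambda$, which means $g$ is an eigenvector of $\mathbf{H}_r$. Otherwise, the strict inequality from the previous step shows that the gradient step incurs strictly larger approximate retain loss. If desired, the gap can be quantified as $\tfrac12\eta^\top\mathbf{H}_r\eta$ with $\eta=\alpha g-\delta^\star$; the Kantorovich inequality gives an explicit bound on the ratio.

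I do not expect a real obstacle here. The only delicate points are the implicit assumption $g\neq0$ and phrasing ``generally suboptimal'' precisely as this iff-eigenvector characterization, so both steps are routine once those are stated.
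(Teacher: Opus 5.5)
Your proposal is correct and follows the same route as the paper: reduce to the equality-constrained quadratic program, solve the Lagrangian stationarity condition $\delta=\mu\mathbf{H}_r^{-1}g$, fix $\mu$ from the constraint, and show that $\alpha g$ coincides with $\delta^\star$ exactly when $g$ is an eigenvector of $\mathbf{H}_r$. Your explicit $g\neq 0$ assumption and the completion argument for uniqueness are slightly more careful than the paper's one-line appeal to strict convexity, but the argument is the same.
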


Here, the constraint $\nabla L_f(\theta_r)^\top \delta = \gamma$ enforces the same first-order unlearning effect across candidate updates, where $\gamma>0$ denotes the desired increase in forget loss. This is consistent with the unlearning direction, since $\delta$ increases the forget loss to first order. The eigenvector case is rare, as it requires the unlearning gradient to align with an eigenvector of $\mathbf{H}_r$. In general, $\nabla L_f(\theta_r)$ is determined by the forget set, while $\mathbf{H}_r$ reflects the retain-set curvature, making such alignment unlikely.
\section{Method}
We propose an interference-aware framework for multi-task unlearning that combines \textbf{Task-Aware Gradient Projection} with \textbf{Instance-Level Gradient Orthogonalization}. The key idea is to mitigate interference at two levels. First, we constrain updates to task-specific subspaces via gradient projection, reducing cross-task interference in the shared parameter space. Second, we remove conflicting components between forget and retain gradients through orthogonalization, preventing unlearning from degrading retained knowledge. These components are unified under a joint optimization objective that balances forgetting and retention while mitigating both task- and instance-level interference. Detailed pseudocode is provided in Appendix~\ref{apx:code}.

\subsection{Task-Aware Gradient Projection}

The goal of machine unlearning is to make the model behave as if it were trained only on the retain set $\mathcal{D}_r$, without exposure to the forget set $\mathcal{D}_f$. However, retraining from scratch for each unlearning request is computationally prohibitive~\cite{chowdhury2025towards,sekhari2021remember,liu2024machine}. Thus, existing approaches start from a pretrained model $\theta^\star$ and apply efficient updates to remove information associated with $\mathcal{D}_f$, while preserving performance on $\mathcal{D}_r$ and maintaining generalization~\cite{kurmanji2023towards,graves2021amnesiac,guo2019certified,golatkar2020eternal}.

To facilitate analysis, we consider a single layer with weight matrix $\mathbf{W}^\star \in \mathbb{R}^{d \times k}$ from $\theta^\star$. A natural approach is to update $\mathbf{W}^\star$ via gradient-based optimization, incorporating a reverse gradient direction induced by the forget set $\mathcal{D}_f$ to facilitate unlearning and a forward gradient direction sampled from the retain set $\mathcal{D}_r$ for calibration. However, such strategies become costly in multi-task settings due to task diversity and repeated calibration overhead~\cite{bourtoule2021machine,yu2020gradient}. Moreover, directly updating the full parameter matrix remains computationally expensive and impractical at scale~\cite{zhao2024continual,shamsian2025go,poppi2024unlearning}.

To address this issue, we reformulate machine unlearning as a model editing problem by learning a plug-in parameter that modifies model behavior. Instead of iteratively updating the full model parameters, we adopt a parameter-efficient formulation and represent the unlearning process as a low-rank update:
\begingroup\small\begin{equation}
\widetilde{\mathbf{W}} = \mathbf{W}^\star + \mathbf{B}\mathbf{A}^\top,
\label{eq:edit}
\end{equation}\endgroup
where $\mathbf{A} \in \mathbb{R}^{k \times r}$ and $\mathbf{B} \in \mathbb{R}^{d \times r}$ are learnable factors, while $\mathbf{W}^\star$ remains fixed. One can update $\mathbf{A}$ and $\mathbf{B}$ using gradients induced by both forget and retain supervision signals. However, this is suboptimal in multi-task settings, as directions associated with $\mathcal{D}_f$ are often entangled with directions useful for $\mathcal{D}_r$. This issue is particularly pronounced in partial unlearning, where supervision for selected tasks is removed while supervision for retained tasks on the same instances must be preserved, leading to \emph{task-level interference}.

To enable selective unlearning, we introduce task-aware gradient projection, which restricts updates for task $t$ to a task-specific subspace $\mathbf{P}_t$:
\begingroup\small\begin{equation}
\nabla_{\mathbf{A}}^{(t)} := \nabla_{\mathbf{A}}\mathbf{P}_t,
\qquad
\nabla_{\mathbf{B}}^{(t)} := \nabla_{\mathbf{B}}\mathbf{P}_t.
\label{eq:projected_gradients}
\end{equation}\endgroup
This projection confines updates to task-specific subspaces, reducing cross-task interference in the shared low-rank adaptation space~\cite{yang2026disentangling,yu2020gradient,liu2021conflict,saha2021gradient}.

Since task-specific updates lie in subspaces of the shared low-rank space, each task occupies only part of the $r$-dimensional adaptation space. Accordingly, for each task $t$, we define an orthonormal task-specific basis
\begingroup\small\begin{equation}
\mathbf{U}_t \in \mathbb{R}^{r \times s},
\qquad
\mathbf{U}_t^\top \mathbf{U}_t = \mathbf{I}_s,
\qquad
\mathbf{P}_t := \mathbf{U}_t \mathbf{U}_t^\top \in \mathbb{R}^{r \times r},
\end{equation}\endgroup
where $r$ is the shared low-rank dimension and $s \le r$ denotes the dimension of the task-specific subspace. Here, $\mathbf{P}_t$ is the orthogonal projector induced by $\mathbf{U}_t$.

During training, we further regularize the task-specific subspaces to be mutually orthogonal. Specifically, for any $t \neq t'$, we minimize the alignment between task-specific subspaces via $\|\mathbf{U}_t^\top \mathbf{U}_{t'}\|_F^2$, which promotes separation between task-specific directions~\cite{wang2023orthogonal,yifei2025dislora}. Consequently, each task updates the shared low-rank factors $\mathbf{A}$ and $\mathbf{B}$ through its corresponding projector $\mathbf{P}_t$. This confines updates to weakly aligned task-specific subspaces, reduces alignment between projected task gradients, and helps mitigate cross-task interference. We formalize this effect in Theorem~\ref{thm:task_projection} in Appendix~\ref{proof:task_projection}.
\subsection{Instance-Level Gradient Orthogonalization}

While task-aware gradient projection restricts updates to task-specific subspaces, it does not eliminate conflicts between forgetting and retention. The forget gradient may still align with directions important for retained data, thereby degrading retain performance and overall generalization~\cite{kurmanji2023towards,graves2021amnesiac,golatkar2020eternal,guo2019certified}.

Let $\mathbf{Z}\in\{\mathbf{A},\mathbf{B}\}$ denote either of the trainable matrices. To prevent such interference when updating $\mathbf{Z}$, we remove the component of the forget gradient $\nabla_{\mathbf{Z},f}$ that aligns with the retain gradient $\nabla_{\mathbf{Z},r}$ via orthogonal projection:
\begingroup\small\begin{equation}
\label{eq:orthogonalization}
\Pi_{\nabla_{\mathbf{Z},r}}^{\perp}(\nabla_{\mathbf{Z},f})
=
\nabla_{\mathbf{Z},f}
-
\frac{\langle \nabla_{\mathbf{Z},f}, \nabla_{\mathbf{Z},r}\rangle_F}
{\|\nabla_{\mathbf{Z},r}\|_F^2+\varepsilon}\,\nabla_{\mathbf{Z},r},
\end{equation}\endgroup
where $\varepsilon>0$ ensures numerical stability. All gradients are already projected onto task-specific subspaces as defined in Eq.~\eqref{eq:projected_gradients}. This operation removes the retain-aligned component of the forget gradient up to the stabilization term $\varepsilon$, making the forget update less disruptive to retained knowledge. We analyze this effect in Theorem~\ref{thm:instance_orthogonalization} of Appendix~\ref{proof:instance_orthogonalization}.

However, multi-task unlearning is more challenging because retained supervision comes from multiple sources, including $\mathcal{D}_r^{\mathrm{clean}}$, $\mathcal{D}_r^{\mathrm{inst}}$, and $\mathcal{D}_r^{\mathrm{task}}$. These subsets impose heterogeneous retention constraints that must be preserved simultaneously. Thus, uniformly sampling from $\mathcal{D}_r$ may fail to capture these constraints effectively~\cite{hoang2024learn,shamsian2025go,lin2024gdr}.

At each iteration, we decompose $\nabla_{\mathbf{Z},r}$ into three components, namely $\nabla_{\mathbf{Z},r}^{\mathrm{clean}}$, $\nabla_{\mathbf{Z},r}^{\mathrm{inst}}$, and $\nabla_{\mathbf{Z},r}^{\mathrm{task}}$, corresponding to the three retained subsets. We then adopt a sequential orthogonalization scheme that applies the projection operators successively:
\begingroup\small\begin{equation}
\label{eq:retain}
\nabla_{\mathbf{Z},f}^{\perp}
:=
\Pi_{\nabla_{\mathbf{Z},r}^{\mathrm{task}}}^{\perp}
\Bigl(
\Pi_{\nabla_{\mathbf{Z},r}^{\mathrm{inst}}}^{\perp}
\bigl(
\Pi_{\nabla_{\mathbf{Z},r}^{\mathrm{clean}}}^{\perp}(\nabla_{\mathbf{Z},f})
\bigr)
\Bigr).
\end{equation}\endgroup

We process retain gradients in the order of clean, instance-level, and task-level signals as a practical design. Clean retain signals provide stable global guidance, instance-level signals capture retention constraints on other instances, and task-level signals preserve supervision on non-target tasks for forgotten instances. This sequential scheme progressively removes components of the forget gradient that conflict with different retained subsets, mitigating both instance- and task-level interference.
\subsection{Overall Optimization}

The sequential orthogonalization step in Eq.~\eqref{eq:retain} produces retain-aware forget gradients $\nabla_{\mathbf{A},f}^{\perp}$ and $\nabla_{\mathbf{B},f}^{\perp}$ that avoid direct conflict with retained supervision. To further stabilize training, we optionally incorporate preservation terms derived from retain gradients~\cite{cheng2026machine,hoang2024learn,kurmanji2023towards}.

We update the low-rank factors by combining a retain-preserving descent direction with a forget-promoting ascent direction:
\begingroup\small\begin{equation}
\mathbf{A} \leftarrow \mathbf{A} - \eta_1 \nabla_{\mathbf{A},r} + \eta_2 \nabla_{\mathbf{A},f}^{\perp}, 
\qquad
\mathbf{B} \leftarrow \mathbf{B} - \eta_1 \nabla_{\mathbf{B},r} + \eta_2 \nabla_{\mathbf{B},f}^{\perp},
\label{eq:parameter_update}
\end{equation}\endgroup
where $\eta_1,\eta_2>0$ control the strengths of retention and forgetting, respectively. The first term decreases the retain loss, while the second increases the forget loss after removing components that conflict with retained gradients. When $\eta_1=0$, the update reduces to pure unlearning using only the projected forget gradients. Here, $\nabla_{\mathbf{A},r}$ and $\nabla_{\mathbf{B},r}$ aggregate retain gradients from $\mathcal{D}_r^{\mathrm{clean}}$, $\mathcal{D}_r^{\mathrm{inst}}$, and $\mathcal{D}_r^{\mathrm{task}}$. The pretrained weight $\mathbf{W}^\star$ is kept fixed throughout training. The final unlearned model is obtained by merging the learned update in Eq.~\eqref{eq:edit}, enabling efficient unlearning while mitigating both task-level and instance-level interference.

\section{Experiments}
\label{sec:experiments}

We evaluate our method under both \emph{full-task} and \emph{partial-task} unlearning settings on two vision benchmarks spanning five tasks. Our experiments examine four key requirements for multi-task unlearning: (1) effective removal of information from the forget set, (2) preservation of retained capabilities on the retain set, (3) mitigation of unintended interference and preservation of generalization, and (4) reduction of residual membership signals associated with forgotten data for privacy protection.

\subsection{Setup}

\vparagraph{Datasets.}
To evaluate multi-task unlearning across varying supervision granularities, we consider two representative vision benchmarks that span image-level, instance-level, and pixel-level prediction. \textbf{NYUv2}~\cite{silberman2012indoor} serves as a dense multi-task benchmark with pixel-level supervision, where three tasks are defined on the same images: \textit{semantic segmentation (SEG)}, \textit{depth estimation (DEP)}, and \textit{surface-normal prediction (NOR)}. \textbf{PASCAL}~\cite{Everingham10} captures heterogeneous supervision, encompassing image-level \textit{image classification (CLS)} and instance-level \textit{object detection (OD)}.

\vparagraph{Baselines.}
As reference models, we include \textbf{Original}, trained on the full dataset, and \textbf{Retrain}, trained from scratch on the retain set, which provide lower- and upper-bound references. For first-order gradient-based methods, we consider \textbf{NegGrad+}~\cite{kurmanji2023towards,graves2021amnesiac}, which performs gradient ascent on forget data with retain-aware optimization. For second-order approaches, we include \textbf{Fisher}~\cite{golatkar2020eternal} and \textbf{Influence}~\cite{guo2019certified}, which estimate parameter importance via Fisher information or influence functions. We further include \textbf{SSD}~\cite{foster2024fast}, which selectively dampens parameters associated with forget data, and \textbf{OrthoGrad}~\cite{shamsian2025go}, which orthogonalizes forget updates against retain gradients. Finally, we consider \textbf{SCRUB}~\cite{kurmanji2023towards}, which integrates forgetting and retention through a teacher-student framework.

\vparagraph{Task utility and privacy metrics.}
For task-specific utility, we report mean Intersection-over-Union (mIoU) for semantic segmentation~\cite{long2015fully}, Threshold Accuracy for depth estimation and Angular Accuracy for surface-normal prediction~\cite{silberman2012indoor}, and mean Average Precision (mAP) for both object detection and multi-label classification~\cite{lin2014microsoft,Everingham10}. Each metric is evaluated on three disjoint splits: the retain set (\textsc{Ret}), the forget set (\textsc{Unl}), and a held-out validation set (\textsc{Val}), corresponding to retention, forgetting, and generalization. The retain set measures preservation of useful supervision, the forget set quantifies the extent of forgetting, and the validation set evaluates generalization beyond the affected training instances. To assess potential privacy leakage, we also conduct Membership Inference Attack~\cite{shokri2017membership} (\textsc{MIA}) to quantify residual memorization after unlearning, i.e., whether a sample can still be inferred as part of the training set.

\vparagraph{Unlearning Impact Score (UIS).}
Evaluating multi-task unlearning is inherently multi-objective, requiring a balance among retention, forgetting, generalization, and memorization~\cite{tu2024towards,cheng2024remaining,ahmed2025towards,shamsian2025go}. Therefore, beyond raw metrics, we assess each method by its deviation from a desired reference behavior: the model should match the retrained model on supervision that should be forgotten, while preserving the original model behavior on supervision that should be retained. This reflects the goal of removing the influence of target supervision without degrading unrelated knowledge. For each task, we compute its relative deviation from the corresponding reference over the relevant evaluation aspects and aggregate the deviations into an overall discrepancy score. A smaller score indicates closer alignment with the desired behavior. In \textbf{full-task unlearning}, all tasks on forget instances are compared against the \textbf{retrained model}, while retain and validation behavior are compared against the \textbf{original model}. In \textbf{partial-task unlearning}, only the \textbf{selected forgotten task} is compared against the \textbf{retrained reference}, whereas the \textbf{remaining tasks} are compared against the \textbf{original model}.

\vparagraph{Implementation Details.}
We evaluate our framework with two representative backbones, ViT-L~\cite{dosovitskiy2021an} and Swin-L~\cite{liu2021swin} (presented in Appendix~\ref{apx:swin}), initialized from ImageNet-pretrained HuggingFace checkpoints~\cite{wolf2019huggingface,deng2009imagenet}. For NYUv2, we attach an ASPP module~\cite{chen2017rethinking} with task-specific heads for semantic segmentation, depth estimation, and surface-normal prediction. For PASCAL, we use a linear classification head and a DETR-style detection head with 100 object queries~\cite{carion2020end}. To model realistic multi-task sharing, we use a single shared LoRA~\cite{huang2023lorahub} across all tasks and follow the official training and validation splits~\cite{Everingham10,silberman2012indoor}. Following prior protocols~\cite{kurmanji2023towards,shamsian2025go}, we designate 10\% of the training instances as the forget set and use the rest as the retain set. Unlearning updates only LoRA parameters with AdamW~\cite{loshchilov2017decoupled} for up to 20 epochs with early stopping. All results are reported as the average of 10 runs, with standard deviations within 3\%.
\begin{table*}[t]
\centering
\small
\caption{
Quantitative results of multi-task unlearning on NYUv2 using ViT-L}
\label{tab:nyuv2_unlearning}

\resizebox{\textwidth}{!}{
\begin{tabular}{llccccccccccccc}
\toprule
\multirow{3}{*}{Setting} & \multirow{3}{*}{Method}
& \multicolumn{4}{c}{\textbf{SEG}}
& \multicolumn{4}{c}{\textbf{DEP}}
& \multicolumn{4}{c}{\textbf{NOR}}
& \multirow{3}{*}{\textbf{UIS}$\downarrow$} \\
\cmidrule(lr){3-6} \cmidrule(lr){7-10} \cmidrule(lr){11-14}
&
& \textsc{Ret} & \textsc{Unl} & \textsc{Val} & \textsc{MIA}
& \textsc{Ret} & \textsc{Unl} & \textsc{Val} & \textsc{MIA}
& \textsc{Ret} & \textsc{Unl} & \textsc{Val} & \textsc{MIA}
& \\
&
& (mIoU) & (mIoU) & (mIoU) & (AUC)
& ($\sigma_{1.25}$) & ($\sigma_{1.25}$) & ($\sigma_{1.25}$) & (AUC)
& (A$_{30}$) & (A$_{30}$) & (A$_{30}$) & (AUC)
& \\
\midrule

\multirow{2}{*}{\shortstack{Ref.}}
& Original
& 0.9349 & 0.9342 & 0.7555 & 0.9716
& 0.8573 & 0.8596 & 0.7130 & 0.6267
& 0.6713 & 0.6738 & 0.6045 & 0.6146
& -- \\
& Retrain
& 0.9400 & 0.7636 & 0.7515 & 0.5499
& 0.8687 & 0.6887 & 0.7027 & 0.4522
& 0.6698 & 0.5775 & 0.5958 & 0.4607
& -- \\
\midrule

\multirow{7}{*}{\shortstack{FU}}
& NegGrad+
& 0.8647 & 0.7611 & 0.7006 & 0.5421
& 0.8259 & 0.7480 & 0.6838 & 0.4851
& 0.5872 & 0.4729 & 0.4963 & 0.4413
& 30.5\% \\
& Fisher
& 0.5149 & 0.4883 & 0.4626 & 0.5539
& 0.6235 & 0.5601 & 0.5612 & 0.4529
& 0.4132 & 0.3921 & 0.3951 & 0.4936
& 99.6\% \\
& Influence
& 0.6736 & 0.6040 & 0.5773 & 0.5012
& 0.6293 & 0.5902 & 0.5529 & 0.4928
& 0.4802 & 0.4593 & 0.4492 & 0.4802
& 77.0\% \\
& SSD
& 0.7018 & 0.7113 & 0.6529 & 0.6938
& 0.4883 & 0.4517 & 0.4938 & 0.4120
& 0.4382 & 0.4158 & 0.3902 & 0.4937
& 97.5\% \\
& OrthoGrad
& 0.8767 & 0.7461 & 0.6973 & 0.5961
& 0.8123 & 0.6929 & 0.6728 & 0.4719
& 0.5828 & 0.4992 & 0.4983 & 0.4494
& 28.6\% \\
& SCRUB
& 0.8196 & 0.7003 & 0.6779 & 0.5170
& 0.8296 & 0.7241 & 0.6767 & 0.4997
& 0.5865 & 0.5403 & 0.4877 & 0.5251
& 37.2\% \\
& Ours
& 0.8728 & 0.7543 & 0.7028 & 0.5197
& 0.8556 & 0.7716 & 0.7063 & 0.4868
& 0.6217 & 0.5696 & 0.5444 & 0.4918
& \textbf{22.0\%} \\
\midrule

\multirow{7}{*}{\shortstack{PU\\(SEG)}}
& NegGrad+
& 0.8117 & 0.7065 & 0.6637 & 0.5617
& 0.8317 & 0.8256 & 0.6997 & 0.5631
& 0.5833 & 0.5761 & 0.5012 & 0.5890
& 34.3\% \\
& Fisher
& 0.5248 & 0.5107 & 0.4879 & 0.5702
& 0.7973 & 0.7901 & 0.6914 & 0.4517
& 0.5713 & 0.5624 & 0.4949 & 0.5516
& 74.0\% \\
& Influence
& 0.7459 & 0.7230 & 0.6128 & 0.6322
& 0.7393 & 0.7307 & 0.6193 & 0.5471
& 0.5629 & 0.5442 & 0.4856 & 0.5503
& 59.8\% \\
& SSD
& 0.7048 & 0.7030 & 0.6539 & 0.6928
& 0.4932 & 0.4501 & 0.5024 & 0.4221
& 0.4392 & 0.4137 & 0.4037 & 0.5193
& 115.4\% \\
& OrthoGrad
& 0.8202 & 0.7238 & 0.6683 & 0.5661
& 0.8229 & 0.8230 & 0.6892 & 0.5722
& 0.5937 & 0.5918 & 0.5027 & 0.5622
& 33.8\% \\
& SCRUB
& 0.8715 & 0.7510 & 0.6913 & 0.5502
& 0.8384 & 0.8217 & 0.6945 & 0.5427
& 0.6027 & 0.5716 & 0.5028 & 0.5692
& 29.7\% \\
& Ours
& 0.8602 & 0.7517 & 0.7073 & 0.5545
& 0.8499 & 0.8695 & 0.7085 & 0.6300
& 0.6316 & 0.6315 & 0.5562 & 0.5769
& \textbf{15.4\%} \\
\midrule

\multirow{7}{*}{\shortstack{PU\\(DEP)}}
& NegGrad+
& 0.9344 & 0.9270 & 0.7477 & 0.9524
& 0.8312 & 0.7568 & 0.6965 & 0.4684
& 0.6052 & 0.5971 & 0.5091 & 0.5649
& 22.5\% \\
& Fisher
& 0.9310 & 0.9257 & 0.7502 & 0.9623
& 0.2895 & 0.2301 & 0.3392 & 0.4603
& 0.6012 & 0.5832 & 0.5023 & 0.5732
& 79.1\% \\
& Influence
& 0.9213 & 0.9202 & 0.7412 & 0.9543
& 0.6103 & 0.5742 & 0.5385 & 0.4696
& 0.5847 & 0.5823 & 0.5138 & 0.5623
& 43.4\% \\
& SSD
& 0.7830 & 0.7827 & 0.7305 & 0.7739
& 0.5819 & 0.5632 & 0.5701 & 0.4239
& 0.4829 & 0.4605 & 0.4391 & 0.5281
& 77.9\% \\
& OrthoGrad
& 0.9292 & 0.9283 & 0.7402 & 0.9586
& 0.8099 & 0.6737 & 0.6729 & 0.4420
& 0.6193 & 0.5927 & 0.5113 & 0.5720
& 20.7\% \\
& SCRUB
& 0.9377 & 0.9317 & 0.7587 & 0.9572
& 0.7497 & 0.6729 & 0.6229 & 0.4882
& 0.6028 & 0.5928 & 0.5129 & 0.5733
& 27.3\% \\
& Ours
& 0.9323 & 0.9410 & 0.7541 & 0.9787
& 0.8337 & 0.7126 & 0.6929 & 0.4523
& 0.6321 & 0.6313 & 0.5556 & 0.5780
& \textbf{12.3\%} \\
\midrule

\multirow{7}{*}{\shortstack{PU\\(NOR)}}
& NegGrad+
& 0.9397 & 0.9265 & 0.7410 & 0.9577
& 0.8460 & 0.8427 & 0.6968 & 0.5819
& 0.5491 & 0.4477 & 0.4627 & 0.4773
& 27.9\% \\
& Fisher
& 0.9132 & 0.9121 & 0.7305 & 0.9591
& 0.8325 & 0.8312 & 0.6839 & 0.5628
& 0.2738 & 0.2448 & 0.2675 & 0.5176
& 71.3\% \\
& Influence
& 0.9120 & 0.9185 & 0.7201 & 0.9651
& 0.8332 & 0.8102 & 0.6770 & 0.5700
& 0.5624 & 0.4611 & 0.4623 & 0.4627
& 30.4\% \\
& SSD
& 0.7038 & 0.7142 & 0.6521 & 0.7018
& 0.4831 & 0.4576 & 0.4928 & 0.4292
& 0.4392 & 0.4130 & 0.3917 & 0.5079
& 116.7\% \\
& OrthoGrad
& 0.9384 & 0.9211 & 0.7492 & 0.9682
& 0.8572 & 0.8401 & 0.7025 & 0.5821
& 0.5793 & 0.4729 & 0.4623 & 0.4729
& 23.5\% \\
& SCRUB
& 0.9291 & 0.9288 & 0.7422 & 0.9605
& 0.8294 & 0.7650 & 0.6887 & 0.5490
& 0.5900 & 0.4955 & 0.5045 & 0.4739
& 26.2\% \\
& Ours
& 0.9305 & 0.9463 & 0.7531 & 0.9858
& 0.8537 & 0.8757 & 0.7054 & 0.6573
& 0.6192 & 0.5486 & 0.5425 & 0.4783
& \textbf{12.4\%} \\
\bottomrule
\end{tabular}
}
\end{table*}
\begin{table*}[t]
\centering
\small
\caption{Quantitative results of multi-task unlearning on Pascal using ViT-L}
\label{tab:pascal_unlearning}
\resizebox{0.74\textwidth}{!}{
\begin{tabular}{llccccccccc}
\toprule
\multirow{3}{*}{Setting} & \multirow{3}{*}{Method}
& \multicolumn{4}{c}{\textbf{CLS}}
& \multicolumn{4}{c}{\textbf{OD}}
& \multirow{3}{*}{\textbf{UIS}$\downarrow$} \\
\cmidrule(lr){3-6} \cmidrule(lr){7-10}
&
& \textsc{Ret} & \textsc{Unl} & \textsc{Val} & \textsc{MIA}
& \textsc{Ret} & \textsc{Unl} & \textsc{Val} & \textsc{MIA}
& \\
&
& (mAP) & (mAP) & (mAP) & (AUC)
& (mAP) & (mAP) & (mAP) & (AUC)
& \\
\midrule

\multirow{2}{*}{\shortstack{Ref.}}
& Original
& 1.0000 & 1.0000 & 0.9245 & 0.6645
& 0.5144 & 0.5352 & 0.4101 & 0.7033
& -- \\
& Retrain
& 1.0000 & 0.9130 & 0.9055 & 0.5373
& 0.5131 & 0.4270 & 0.4223 & 0.4953
& -- \\
\midrule

\multirow{7}{*}{\shortstack{FU}}
& NegGrad+
& 0.9705 & 0.9588 & 0.8653 & 0.5284
& 0.3659 & 0.3126 & 0.3080 & 0.5370
& 52.5\% \\
& Fisher
& 0.8922 & 0.8479 & 0.8372 & 0.5134
& 0.3649 & 0.3037 & 0.2854 & 0.5571
& 66.3\% \\
& Influence
& 0.9077 & 0.8551 & 0.8324 & 0.5079
& 0.3781 & 0.2945 & 0.3139 & 0.5439
& 61.0\% \\
& SSD
& 0.9900 & 0.9663 & 0.9116 & 0.6018
& 0.4739 & 0.4237 & 0.3382 & 0.6127
& 35.8\% \\
& OrthoGrad
& 0.9621 & 0.9297 & 0.8193 & 0.5892
& 0.4592 & 0.4182 & 0.3928 & 0.6482
& 37.6\% \\
& SCRUB
& 0.9753 & 0.9182 & 0.8230 & 0.5439
& 0.4183 & 0.3301 & 0.3092 & 0.5563
& 46.8\% \\
& Ours
& 0.9979 & 0.9485 & 0.8663 & 0.5297
& 0.4451 & 0.3731 & 0.3803 & 0.4951
& \textbf{22.9\%} \\
\midrule

\multirow{7}{*}{\shortstack{PU\\(CLS)}}
& NegGrad+
& 0.9155 & 0.8440 & 0.7639 & 0.5829
& 0.3967 & 0.4180 & 0.3655 & 0.6258
& 53.4\% \\
& Fisher
& 0.8631 & 0.8289 & 0.8057 & 0.5567
& 0.3702 & 0.4293 & 0.3283 & 0.6695
& 55.1\% \\
& Influence
& 0.8157 & 0.7660 & 0.7320 & 0.5202
& 0.4007 & 0.3131 & 0.2938 & 0.6033
& 81.5\% \\
& SSD
& 0.8671 & 0.8432 & 0.8425 & 0.4861
& 0.3776 & 0.3851 & 0.2883 & 0.6071
& 67.7\% \\
& OrthoGrad
& 0.9721 & 0.9403 & 0.8351 & 0.6184
& 0.4036 & 0.3889 & 0.3288 & 0.6751
& 50.7\% \\
& SCRUB
& 0.9829 & 0.9548 & 0.9157 & 0.6283
& 0.4201 & 0.3838 & 0.3634 & 0.6217
& 47.0\% \\
& Ours
& 0.9701 & 0.8303 & 0.8558 & 0.5403
& 0.4569 & 0.5413 & 0.3992 & 0.7105
& \textbf{17.0\%} \\
\midrule

\multirow{7}{*}{\shortstack{PU\\(OD)}}
& NegGrad+
& 1.0000 & 0.9906 & 0.8875 & 0.6457
& 0.3084 & 0.2524 & 0.2507 & 0.5367
& 68.8\% \\
& Fisher
& 0.9541 & 0.9784 & 0.8673 & 0.6032
& 0.2847 & 0.2201 & 0.2157 & 0.4683
& 84.8\% \\
& Influence
& 0.9998 & 0.9992 & 0.8996 & 0.6307
& 0.3277 & 0.2804 & 0.2776 & 0.5488
& 61.7\% \\
& SSD
& 0.9963 & 0.9991 & 0.8704 & 0.6143
& 0.3085 & 0.2474 & 0.2567 & 0.5161
& 69.6\% \\
& OrthoGrad
& 1.0000 & 0.9999 & 0.8965 & 0.6523
& 0.4019 & 0.3521 & 0.3037 & 0.5639
& 43.0\% \\
& SCRUB
& 0.9970 & 0.9863 & 0.8759 & 0.5874
& 0.3892 & 0.3492 & 0.3251 & 0.5127
& 41.2\% \\
& Ours
& 1.0000 & 1.0000 & 0.9017 & 0.6684
& 0.4632 & 0.3738 & 0.3675 & 0.4962
& \textbf{19.2\%} \\
\bottomrule
\end{tabular}
}
\end{table*}

\subsection{Full-task Unlearning.}
Tables~\ref{tab:nyuv2_unlearning} and~\ref{tab:pascal_unlearning} report full-task unlearning (FU), where all task supervision on forget instances is removed. The unlearned model should move forget instances toward \textit{Retrain}, while keeping retain and validation performance close to \textit{Original}. UIS measures deviation from this target behavior, with smaller values indicating a better forgetting-preservation trade-off. Overall, our method achieves the lowest UIS in FU on both benchmarks, reducing the score from $28.6\%$ to $22.0\%$ on NYUv2 and from $35.8\%$ to $22.9\%$ on Pascal compared with the strongest baseline. These results show that our method removes forget supervision (\textsc{Unl}), preserves retain performance (\textsc{Ret}), maintains generalization (\textsc{Val}), and reduces privacy leakage (\textsc{MIA}) by modeling task- and instance-level interference. The average gain is larger on NYUv2, where dense prediction tasks share spatial structure, while Pascal is harder because classification and detection use different supervision granularities. A similar trend is observed with Swin-L, where UIS is reduced from $31.2\%$ to $14.6\%$, as shown in Appendix~\ref{apx:swin}.

\subsection{Partial-task Unlearning}
Tables~\ref{tab:nyuv2_unlearning} and~\ref{tab:pascal_unlearning} also report partial-task unlearning (PU), where only selected task labels on forget instances are removed. In the tables, PU(SEG), PU(DEP), and PU(NOR) denote unlearning semantic segmentation, depth estimation, and surface normal estimation on NYUv2, while PU(CLS) and PU(OD) denote unlearning classification and object detection on Pascal. This setting requires finer control because the same instance may contain both forgotten and retained supervision. The model should move the forgotten task toward \textit{Retrain} while keeping the remaining tasks close to \textit{Original}. Overall, our method achieves the lowest UIS across PU settings, reducing UIS from $24.6\%$ to $13.4\%$ on NYUv2 and from $44.1\%$ to $18.1\%$ on Pascal, corresponding to $45.7\%$ and $59.0\%$ relative reductions. In contrast, SSD and Fisher reduce retained DEP and NOR performance under NYUv2 PU(SEG), while Influence, SSD, and Fisher hurt retained or validation OD performance under Pascal PU(CLS). These results show that existing methods often let forgetting updates spill over to non-target tasks, whereas our method better decouples task- and instance-level unlearning signals.

\subsection{Ablation Study}

\begin{table*}[t]
\centering
\small
\caption{
Ablation study for multi-task unlearning on NYUv2 using ViT-L.
}
\label{tab:nyuv2_ablation}

\resizebox{\textwidth}{!}{
\begin{tabular}{llccccccccccccc}
\toprule
\multirow{3}{*}{Setting} & \multirow{3}{*}{Method}
& \multicolumn{4}{c}{\textbf{SEG}}
& \multicolumn{4}{c}{\textbf{DEP}}
& \multicolumn{4}{c}{\textbf{NOR}}
& \multirow{3}{*}{\textbf{UIS}$\downarrow$} \\
\cmidrule(lr){3-6} \cmidrule(lr){7-10} \cmidrule(lr){11-14}
&
& \textsc{Ret} & \textsc{Unl} & \textsc{Val} & \textsc{MIA}
& \textsc{Ret} & \textsc{Unl} & \textsc{Val} & \textsc{MIA}
& \textsc{Ret} & \textsc{Unl} & \textsc{Val} & \textsc{MIA}
& \\
&
& (mIoU) & (mIoU) & (mIoU) & (AUC)
& ($\sigma_{1.25}$) & ($\sigma_{1.25}$) & ($\sigma_{1.25}$) & (AUC)
& (A$_{30}$) & (A$_{30}$) & (A$_{30}$) & (AUC)
& \\
\midrule

\multirow{2}{*}{\shortstack{Ref.}}
& Original
& 0.9349 & 0.9342 & 0.7555 & 0.9716
& 0.8573 & 0.8596 & 0.7130 & 0.6267
& 0.6713 & 0.6738 & 0.6045 & 0.6146
& -- \\
& Retrain
& 0.9400 & 0.7636 & 0.7515 & 0.5499
& 0.8687 & 0.6887 & 0.7027 & 0.4522
& 0.6698 & 0.5775 & 0.5958 & 0.4607
& -- \\
\midrule

\multirow{3}{*}{\shortstack{FU}}
& w/o Projection
& 0.7881 & 0.7702 & 0.6928 & 0.5622
& 0.8193 & 0.6348 & 0.6796 & 0.4171
& 0.5823 & 0.5566 & 0.5379 & 0.5268
& 30.8\% \\
& w/o Task
& 0.2599 & 0.2943 & 0.2627 & 0.5509
& 0.4530 & 0.4294 & 0.4018 & 0.4332
& 0.3019 & 0.2969 & 0.2917 & 0.4932
& 164.4\% \\
& Ours
& 0.8728 & 0.7543 & 0.7028 & 0.5197
& 0.8556 & 0.7716 & 0.7063 & 0.4868
& 0.6217 & 0.5696 & 0.5444 & 0.4918
& \textbf{22.0\%} \\
\midrule

\multirow{5}{*}{\shortstack{PU\\(SEG)}}
& w/o Projection
& 0.8728 & 0.7737 & 0.7181 & 0.5580
& 0.8241 & 0.8532 & 0.6885 & 0.6367
& 0.6030 & 0.6080 & 0.5153 & 0.5977
& 20.5\% \\
& w/o Task
& 0.2804 & 0.3164 & 0.2800 & 0.5464
& 0.8638 & 0.8652 & 0.7188 & 0.6055
& 0.6285 & 0.6176 & 0.5574 & 0.5551
& 76.6\% \\
& w/o Inst
& 0.8406 & 0.7469 & 0.6755 & 0.5520
& 0.8597 & 0.8412 & 0.7011 & 0.5891
& 0.6325 & 0.6163 & 0.5500 & 0.5547
& 22.1\% \\
& w/o Clean
& 0.8296 & 0.7271 & 0.6627 & 0.5290
& 0.8357 & 0.8692 & 0.7006 & 0.6486
& 0.6151 & 0.6305 & 0.5558 & 0.5723
& 23.6\% \\
& Ours
& 0.8707 & 0.7972 & 0.7073 & 0.5545
& 0.8499 & 0.8695 & 0.7085 & 0.6300
& 0.6316 & 0.6315 & 0.5562 & 0.5769
& \textbf{16.0\%} \\
\midrule

\multirow{5}{*}{\shortstack{PU\\(DEP)}}
& w/o Projection
& 0.9310 & 0.9382 & 0.7535 & 0.9770
& 0.8100 & 0.6589 & 0.6771 & 0.4310
& 0.6076 & 0.6066 & 0.5167 & 0.5941
& 19.5\% \\
& w/o Task
& 0.9344 & 0.9323 & 0.7558 & 0.9677
& 0.5915 & 0.5702 & 0.5262 & 0.4735
& 0.6322 & 0.6187 & 0.5572 & 0.5555
& 37.0\% \\
& w/o Inst
& 0.9217 & 0.9205 & 0.7532 & 0.9556
& 0.8288 & 0.7031 & 0.6967 & 0.4410
& 0.6180 & 0.5672 & 0.5476 & 0.5456
& 19.8\% \\
& w/o Clean
& 0.9278 & 0.9403 & 0.7480 & 0.9785
& 0.8134 & 0.7024 & 0.6913 & 0.4486
& 0.6125 & 0.6289 & 0.5475 & 0.5669
& 15.5\% \\
& Ours
& 0.9323 & 0.9410 & 0.7541 & 0.9787
& 0.8337 & 0.7035 & 0.6929 & 0.4411
& 0.6321 & 0.6313 & 0.5556 & 0.5780
& \textbf{12.7\%} \\
\midrule

\multirow{5}{*}{\shortstack{PU\\(NOR)}}
& w/o Projection
& 0.9266 & 0.9477 & 0.7527 & 0.9863
& 0.8248 & 0.8788 & 0.6964 & 0.6742
& 0.6023 & 0.5331 & 0.5092 & 0.5052
& 20.7\% \\
& w/o Task
& 0.9295 & 0.9369 & 0.7502 & 0.9736
& 0.8596 & 0.8641 & 0.7034 & 0.6164
& 0.2841 & 0.2716 & 0.2847 & 0.4966
& 58.7\% \\
& w/o Inst
& 0.9213 & 0.9160 & 0.7504 & 0.9665
& 0.8550 & 0.8474 & 0.7068 & 0.5894
& 0.6164 & 0.5394 & 0.5430 & 0.4699
& 12.9\% \\
& w/o Clean
& 0.9295 & 0.9398 & 0.7517 & 0.9823
& 0.8391 & 0.8755 & 0.6976 & 0.6566
& 0.6039 & 0.5284 & 0.5365 & 0.4619
& 14.1\% \\
& Ours
& 0.9305 & 0.9463 & 0.7531 & 0.9858
& 0.8537 & 0.8757 & 0.7054 & 0.6573
& 0.6192 & 0.5486 & 0.5425 & 0.4783
& \textbf{12.4\%} \\
\bottomrule
\end{tabular}
}
\end{table*}

Table~\ref{tab:nyuv2_ablation} analyzes each component on NYUv2. We compare the full method with four variants: \textbf{(i) w/o Projection} removes task-aware gradient projection; \textbf{(ii) w/o Task} removes same-task constraints across different instances; \textbf{(iii) w/o Inst} removes cross-task constraints on the same instance; and \textbf{(iv) w/o Clean} removes constraints from clean retain samples. Under FU, w/o Task causes the largest degradation, increasing UIS from $22.0\%$ to $164.4\%$, showing that same-task retain constraints are crucial for protecting retain instances. Removing Projection also increases UIS to $30.8\%$, confirming the role of task-aware projection. For PU, w/o Task remains the most influential ablation, especially for SEG under PU(SEG) and NOR under PU(NOR), where UIS rises to $76.6\%$ and $58.7\%$, respectively. In contrast, w/o Inst and w/o Clean cause milder increases. Overall, the full method best mitigates both task- and instance-level interference.

\subsection{Scalability Analysis}

Figure~\ref{fig:unlearn_ratio} further evaluates scalability under increasing unlearn ratios on NYUv2, from $10\%$ to $50\%$. Our method remains stable across all settings, keeping UIS around $20\%$--$25\%$ even when half of the data is unlearned. In contrast, the baselines degrade rapidly as the unlearn ratio increases. SCRUB is already about twice as large as ours at low ratios and becomes unstable at $50\%$, with UIS increasing to over $200\%$ in FU, PU(SEG), and PU(DEP), and to over $120\%$ in PU(NOR). NegGrad+ also rises substantially, often exceeding $50\%$ under larger unlearn ratios. These failures are mainly caused by large drops in \textsc{Ret} and \textsc{Val}, indicating that the model loses retained utility and generalization rather than only forgetting the target data. These results show that our method can handle large-scale multi-task unlearning in a stable manner, effectively removing increasing amounts of supervision while avoiding severe damage to model performance.
\begin{figure}[t]
    \centering
    \begin{subfigure}[t]{0.24\linewidth}
        \centering
        \includegraphics[width=\linewidth]{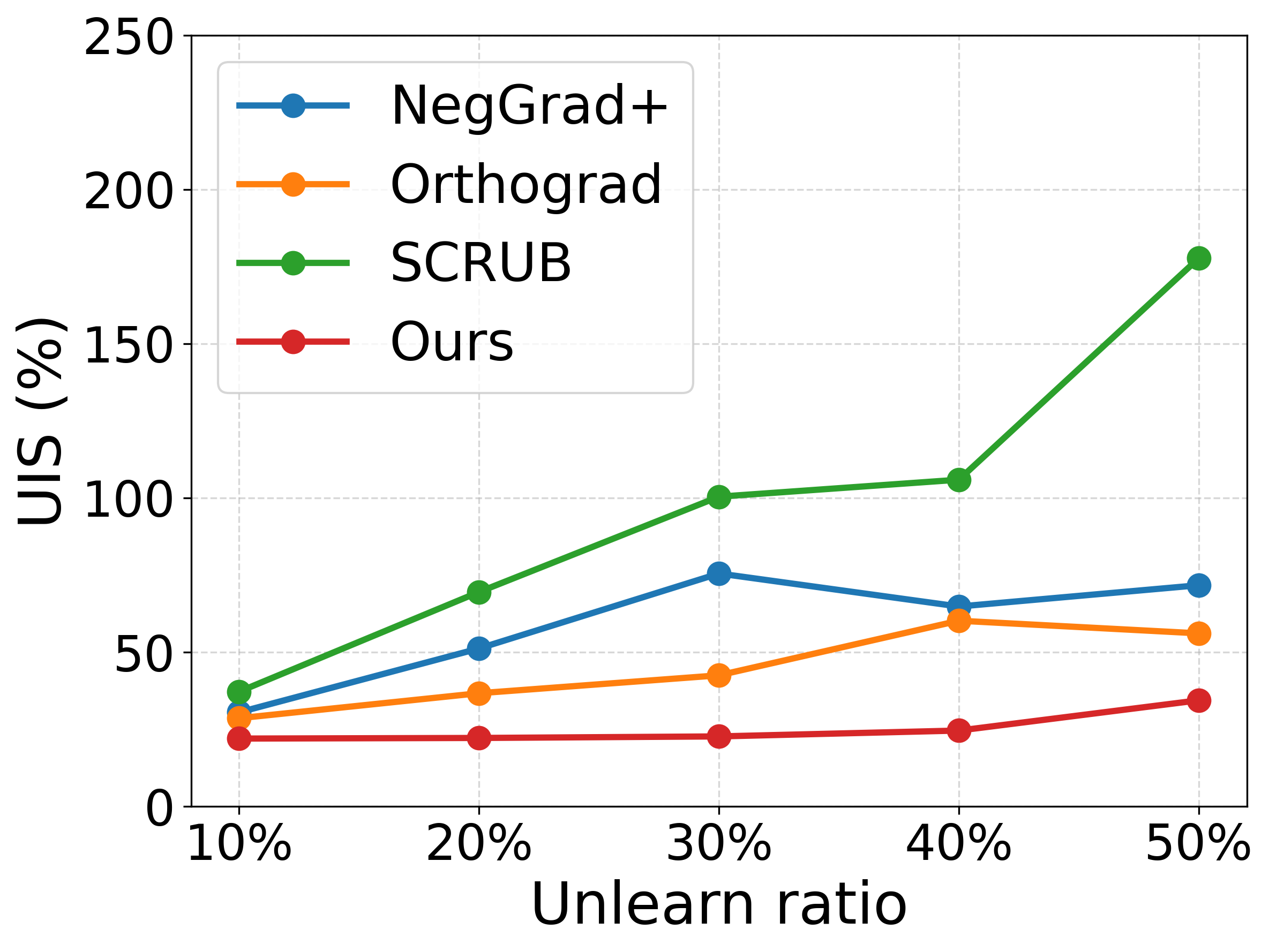}
        \caption{FU}
        \label{fig:ratio_fu}
    \end{subfigure}
    \hfill
    \begin{subfigure}[t]{0.24\linewidth}
        \centering
        \includegraphics[width=\linewidth]{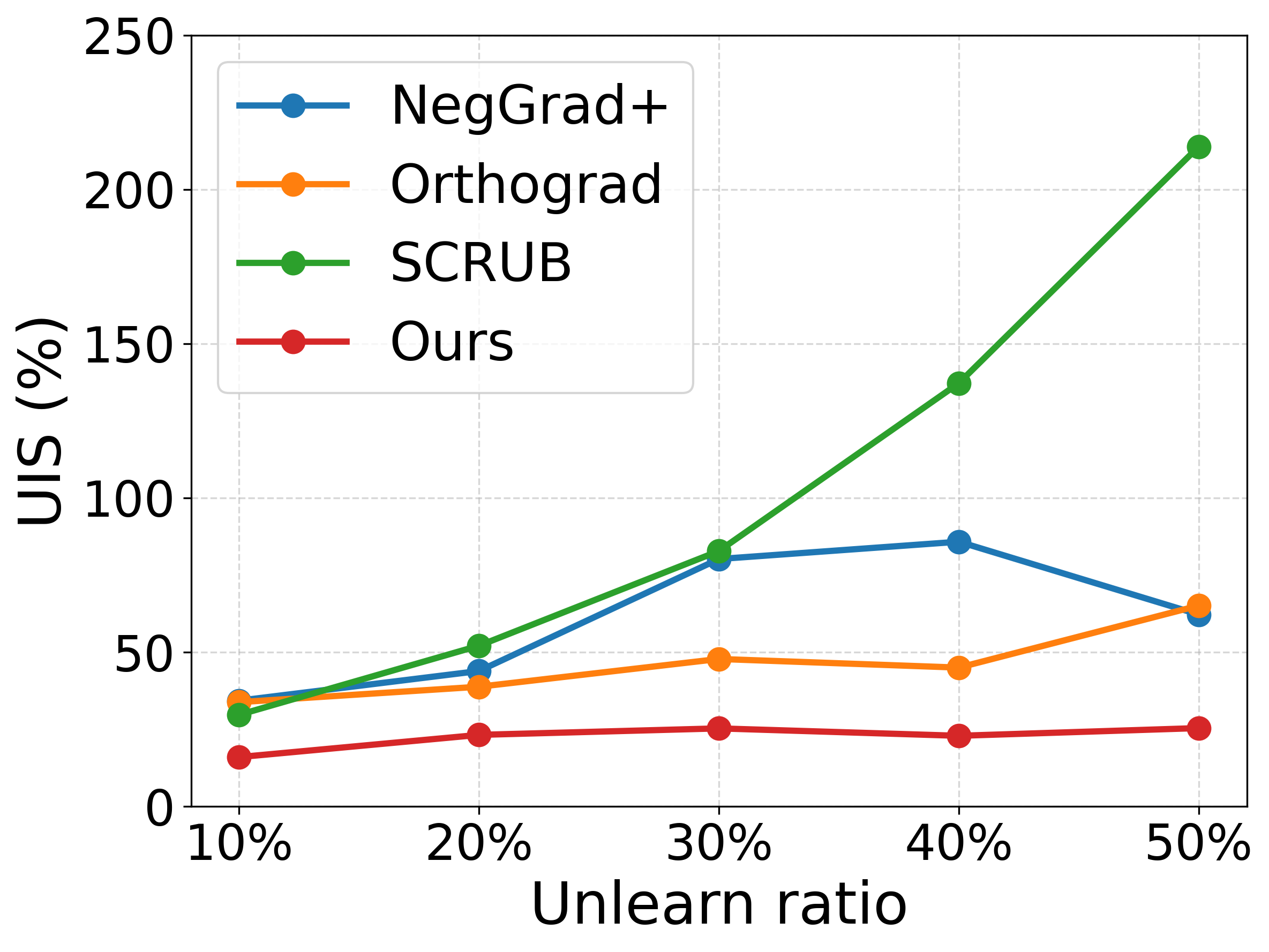}
        \caption{PU (SEG)}
        \label{fig:ratio_seg}
    \end{subfigure}
    \hfill
    \begin{subfigure}[t]{0.24\linewidth}
        \centering
        \includegraphics[width=\linewidth]{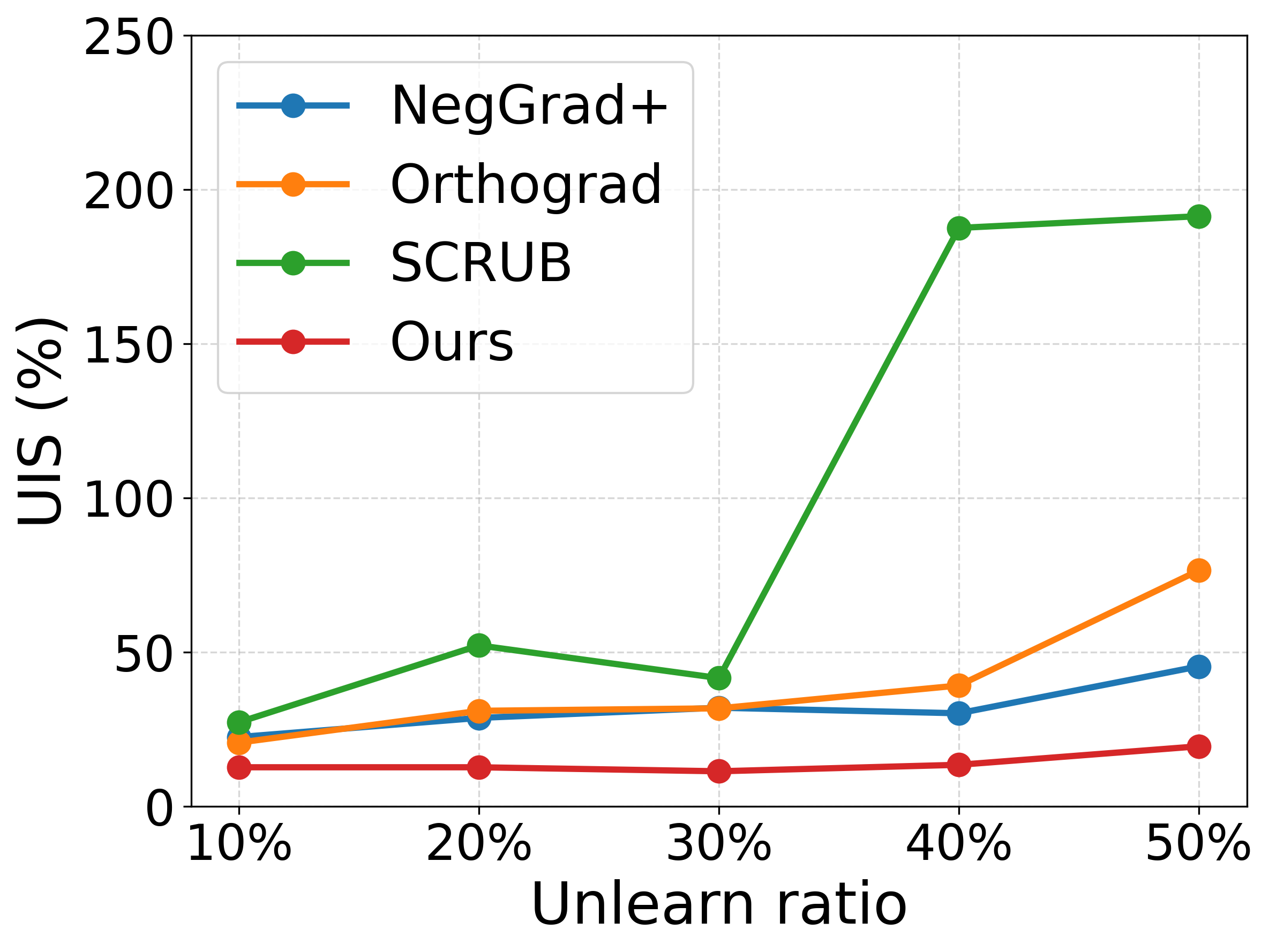}
        \caption{PU (DEP)}
        \label{fig:ratio_dep}
    \end{subfigure}
    \hfill
    \begin{subfigure}[t]{0.24\linewidth}
        \centering
        \includegraphics[width=\linewidth]{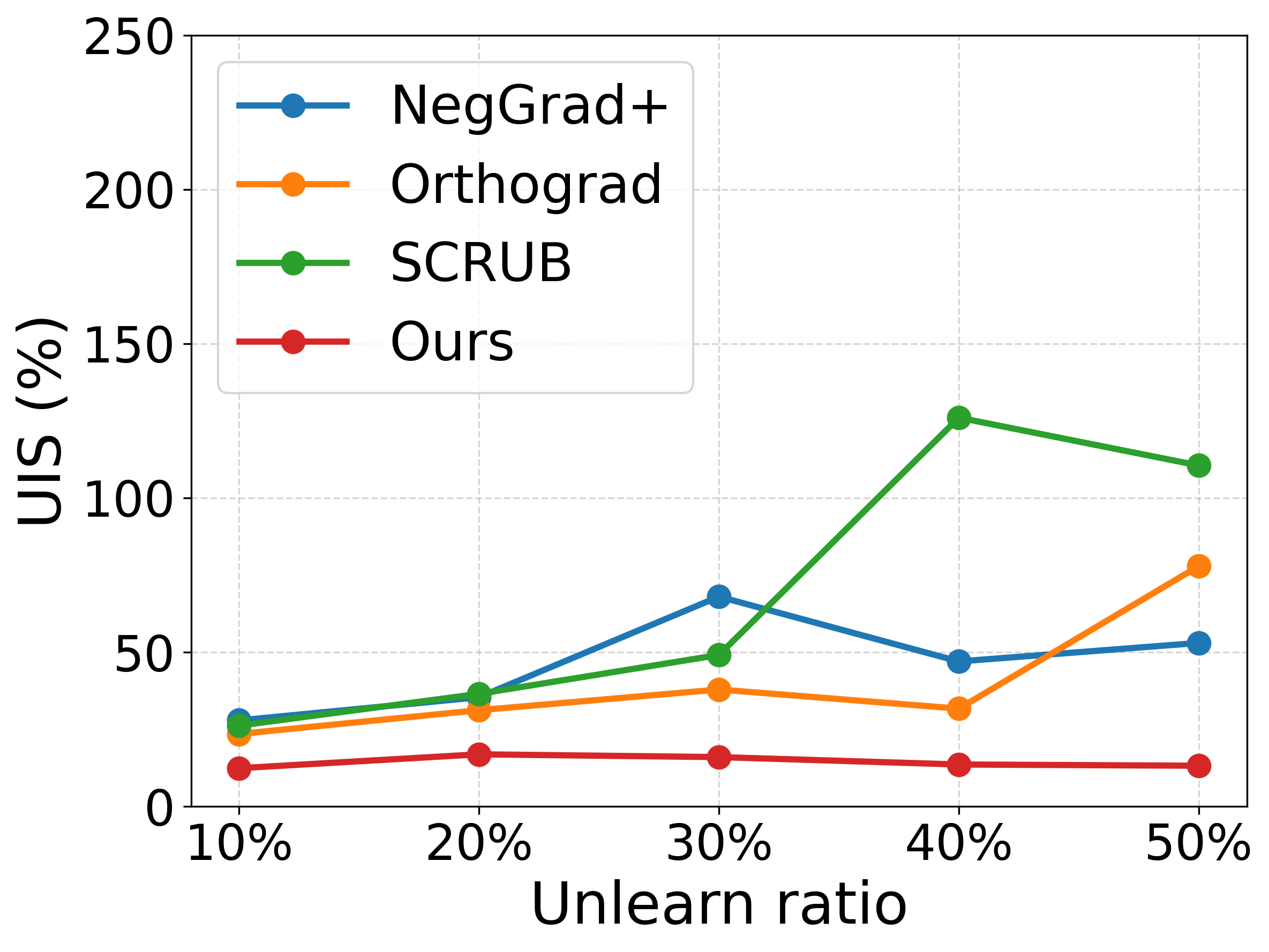}
        \caption{PU (NOR)}
        \label{fig:ratio_nor}
    \end{subfigure}

    \caption{
    Unlearning impact score (\%) under different unlearn ratios. Lower is better}
    \label{fig:unlearn_ratio}
\end{figure}
\section{Related Work}
Machine unlearning~\cite{cao2015towards,bourtoule2021machine,graves2021amnesiac,guo2019certified,golatkar2020eternal,ding2024unified,cha2024towards} aims to remove sensitive or outdated information from trained models while preserving utility. It is important for privacy~\cite{voigt2017eu}, security~\cite{huang2025survey}, fairness~\cite{zhang2024forgotten}, and robustness~\cite{qian2023towards}, and supports applications such as debiasing~\cite{chen2023fast}, debugging~\cite{surve2025explaining}, and auditing~\cite{wang2025tape}. Early work studied exact unlearning~\cite{cao2015towards,bourtoule2021machine,dukler2023safe,yan2022arcane,chowdhury2025towards}, which partitions data or computation so only affected subsets are retrained, but remains costly with repeated training. Later work explored approximate unlearning~\cite{graves2021amnesiac,golatkar2020eternal,roy2025novo,tong2025lethevit,guo2019certified,zhao2024makes,kurmanji2023towards}, which updates parameters to approximate retraining behavior through negative-gradient updates~\cite{kurmanji2023towards}, noise-based supervision~\cite{graves2021amnesiac}, or influence- and Fisher-based targeted updates~\cite{guo2019certified,golatkar2020eternal}. As models scale~\cite{liu2021swin,dosovitskiy2021an}, even fine-tuning-based unlearning becomes impractical~\cite{roy2025novo,tong2025lethevit}, since it must forget target data while preserving retain-set performance. Recent work therefore adopts parameter-efficient fine-tuning and treats unlearning as model editing~\cite{liu2025rethinking}, where lightweight adapters learn the unlearning behavior and are merged into the base model~\cite{ding2024unified,cha2024towards,liu2025lune}. However, most methods focus on single-task unlearning~\cite{shamsian2025go, foster2024fast, spartalis2025lotus, khalil2025coun, ebrahimpour2025amun}, whereas modern models often operate in multi-task settings~\cite{kamalesh2024unolora,xin2024vmt,agiza2024mtlora,yadav2023ties,yu2024language,prabhakar2025lora} with shared backbones. Because tasks are coupled through shared parameters, forgetting one task can unintentionally affect others, making multi-task unlearning substantially more challenging.

\section{Limitations and Conclusion}

In this paper, we introduced multi-task unlearning under two complementary scenarios: full-task unlearning, which removes all task-specific supervision associated with target instances, and partial-task unlearning, which removes only selected task supervision while preserving remaining valid predictions. We theoretically showed that multi-task unlearning inherently induces task- and instance-level interference. To address this issue, we proposed a framework that combines task-aware gradient projection with instance-level gradient orthogonalization. The former localizes updates to task-specific subspaces, while the latter removes forget-gradient components that conflict with retained supervision. Experiments on NYUv2 and Pascal show that our method better matches the desired reference behavior in both full- and partial-task unlearning. A limitation of our study is that the evaluation focuses on vision benchmarks. Future work will extend multi-task unlearning to text and audio domains, where task definitions and supervision structures may differ substantially.

\bibliographystyle{abbrv}
\bibliography{reference}

\clearpage
\appendix
\section{Notations}
\label{apx:notation}
\begin{table}[ht]
\centering
\small
\caption{Summary of notations.}
\label{tab:notation}
\begin{tabular}{p{0.15\linewidth}p{0.83\linewidth}}
\toprule
\textbf{Notation} & \textbf{Description} \\
\midrule
$\mathcal{X}$ & Set of input instances. \\
$\mathbf{x}_i$ & The $i$-th input instance. \\
$\mathcal{T}$ & Set of tasks. \\
$t$ & Task index. \\
$y_i^{(t)}$ & Supervision signal of instance $\mathbf{x}_i$ for task $t$. \\
$\mathcal{D}$ & Multi-task dataset containing task-instance supervision pairs. \\
$\theta$ & Shared model parameters. \\
$\theta^\star$ & Model trained on the full training set. \\
$\theta_r$ & Retrained model using only the retain set. \\
$f_t(\cdot;\theta)$ & Predictor for task $t$. \\
$\ell_{i,t}(\theta)$ & Loss of instance $\mathbf{x}_i$ on task $t$. \\
$L_r(\theta)$ & Empirical loss on the retain set. \\
$L_f(\theta)$ & Empirical loss on the forget set. \\
\midrule
$\mathcal{X}_f$ & Set of instances to be forgotten. \\
$\mathcal{X}_r$ & Retained instance set, $\mathcal{X}\setminus\mathcal{X}_f$. \\
$\mathcal{T}_f$ & Set of tasks whose supervision should be removed. \\
$\mathcal{T}_r$ & Retained task set, $\mathcal{T}\setminus\mathcal{T}_f$. \\
$\mathcal{D}_f$ & Forget set, containing supervision on forgotten instances for forgotten tasks. \\
$\mathcal{D}_r$ & Retain set, $\mathcal{D}\setminus\mathcal{D}_f$. \\
$\mathcal{D}_r^{\mathrm{task}}$ & Retained supervision on forgotten instances for retained tasks. \\
$\mathcal{D}_r^{\mathrm{inst}}$ & Retained supervision on retained instances for forgotten tasks. \\
$\mathcal{D}_r^{\mathrm{clean}}$ & Retained supervision on retained instances for retained tasks. \\
\midrule
$\mathbf{W}^\star$ & Frozen pretrained weight matrix. \\
$\widetilde{\mathbf{W}}$ & Edited weight matrix after low-rank unlearning. \\
$\mathbf{A},\mathbf{B}$ & Learnable low-rank factors. \\
$r$ & Shared low-rank dimension. \\
$s$ & Task-specific subspace dimension. \\
$\mathbf{U}_t$ & Orthonormal basis for the task-specific subspace of task $t$. \\
$\mathbf{P}_t$ & Task-specific projection matrix, $\mathbf{P}_t=\mathbf{U}_t\mathbf{U}_t^\top$. \\
$\nabla_{\mathbf{A}}^{(t)},\nabla_{\mathbf{B}}^{(t)}$ & Projected task-specific gradients for $\mathbf{A}$ and $\mathbf{B}$. \\
$\nabla_{\mathbf{A},f},\nabla_{\mathbf{B},f}$ & Forget gradients for $\mathbf{A}$ and $\mathbf{B}$. \\
$\nabla_{\mathbf{A},r},\nabla_{\mathbf{B},r}$ & Retain gradients for $\mathbf{A}$ and $\mathbf{B}$. \\
$\nabla_{\mathbf{A},f}^{\perp},\nabla_{\mathbf{B},f}^{\perp}$ & Retain-aware orthogonalized forget gradients. \\
$\eta_1,\eta_2$ & Step sizes controlling retention and forgetting strengths. \\
\midrule
$\mathbf{H}_r$ & Hessian of the retain loss at $\theta_r$. \\
$\rho$ & Ratio between forget and retain set sizes, $|\mathcal{D}_f|/|\mathcal{D}_r|$. \\
$\gamma$ & Desired first-order unlearning effect. \\
$\delta$ & Parameter update direction. \\
\bottomrule
\end{tabular}
\end{table}
\section{Detailed Proofs}

\subsection{Proof of Theorem~\ref{thm:interference}}

\begin{customthm}{\ref{thm:interference}}

\end{customthm}

\begin{proof}
By the definitions of $\theta^\star$ and $\theta_r$ as local optima, we have
\[
\nabla L(\theta^\star)=0,
\qquad
\nabla L_r(\theta_r)=0,
\]
where
\[
L(\theta)
=
\frac{|\mathcal{D}_r|}{|\mathcal{D}|}L_r(\theta)
+
\frac{|\mathcal{D}_f|}{|\mathcal{D}|}L_f(\theta).
\]
Multiplying $\nabla L(\theta^\star)=0$ by $|\mathcal{D}|/|\mathcal{D}_r|$ gives
\[
\nabla L_r(\theta^\star)+\rho\,\nabla L_f(\theta^\star)=0,
\]
where $\rho=|\mathcal{D}_f|/|\mathcal{D}_r|$.

Applying Taylor expansion around $\theta_r$, we obtain
\[
\nabla L_r(\theta^\star)
=
\mathbf{H}_r(\theta^\star-\theta_r)
+
O(\|\theta^\star-\theta_r\|^2),
\]
where we use $\nabla L_r(\theta_r)=0$ and 
$\mathbf{H}_r=\nabla^2 L_r(\theta_r)$. Similarly,
\[
\nabla L_f(\theta^\star)
=
\nabla L_f(\theta_r)
+
O(\|\theta^\star-\theta_r\|).
\]

Substituting these expansions into 
\[
\nabla L_r(\theta^\star)+\rho\,\nabla L_f(\theta^\star)=0
\]
yields
\[
\mathbf{H}_r(\theta^\star-\theta_r)
+
\rho\,\nabla L_f(\theta_r)
=
O(\|\theta^\star-\theta_r\|^2)
+
O(\rho\|\theta^\star-\theta_r\|).
\]
Since $\mathbf{H}_r$ is invertible and $\rho\ll 1$, it follows that
\[
\theta_r-\theta^\star
=
\rho\,\mathbf{H}_r^{-1}\nabla L_f(\theta_r)
+
O(\rho^2).
\]

Applying Taylor expansion of $\ell_{i,t}$ around $\theta_r$ gives
\[
\ell_{i,t}(\theta^\star)
=
\ell_{i,t}(\theta_r)
+
\nabla \ell_{i,t}(\theta_r)^\top(\theta^\star-\theta_r)
+
O(\|\theta^\star-\theta_r\|^2).
\]
Rearranging and substituting the above result with 
$\|\theta^\star-\theta_r\|=O(\rho)$ yields
\[
\ell_{i,t}(\theta_r)-\ell_{i,t}(\theta^\star)
=
\rho\,
\nabla \ell_{i,t}(\theta_r)^\top
\mathbf{H}_r^{-1}
\nabla L_f(\theta_r)
+
O(\rho^2).
\]
The theorem follows.
\end{proof}

\subsection{Proof of Corollary~\ref{cor:interference}}

\begin{customcor}{\ref{cor:interference}}

\end{customcor}

\begin{proof}
From Theorem~\ref{thm:interference}, for any retained task-instance pair
$(\mathbf{x}_i,t,y_i^{(t)})\in\mathcal{D}_r$, the loss change induced by removing $\mathcal{D}_f$ satisfies
\begin{equation}
\ell_{i,t}(\theta_r)-\ell_{i,t}(\theta^\star)
=
\rho\,
\nabla \ell_{i,t}(\theta_r)^\top
\mathbf{H}_r^{-1}
\nabla L_f(\theta_r)
+ O(\rho^2),
\end{equation}
where $\rho=|\mathcal{D}_f|/|\mathcal{D}_r|$.

Aggregating over any subset $\mathcal{S}\subseteq\mathcal{D}_r$ yields
\begin{equation}
\sum_{(\mathbf{x}_i,t,y_i^{(t)})\in\mathcal{S}}
\bigl(\ell_{i,t}(\theta_r)-\ell_{i,t}(\theta^\star)\bigr)
=
\rho
\sum_{(\mathbf{x}_i,t,y_i^{(t)})\in\mathcal{S}}
\nabla \ell_{i,t}(\theta_r)^\top
\mathbf{H}_r^{-1}
\nabla L_f(\theta_r)
+ O(|\mathcal{S}|\rho^2).
\end{equation}

Applying this result to $\mathcal{D}_r^{\mathrm{task}}$ and $\mathcal{D}_r^{\mathrm{inst}}$ gives
\begin{align}
\sum_{(\mathbf{x}_i,t,y_i^{(t)})\in\mathcal{D}_r^{\mathrm{task}}}
\bigl(\ell_{i,t}(\theta_r)-\ell_{i,t}(\theta^\star)\bigr)
&=
\rho
\sum_{(\mathbf{x}_i,t,y_i^{(t)})\in\mathcal{D}_r^{\mathrm{task}}}
\nabla \ell_{i,t}(\theta_r)^\top
\mathbf{H}_r^{-1}
\nabla L_f(\theta_r)
+ O\!\left(|\mathcal{D}_r^{\mathrm{task}}|\rho^2\right), \\
\sum_{(\mathbf{x}_i,t,y_i^{(t)})\in\mathcal{D}_r^{\mathrm{inst}}}
\bigl(\ell_{i,t}(\theta_r)-\ell_{i,t}(\theta^\star)\bigr)
&=
\rho
\sum_{(\mathbf{x}_i,t,y_i^{(t)})\in\mathcal{D}_r^{\mathrm{inst}}}
\nabla \ell_{i,t}(\theta_r)^\top
\mathbf{H}_r^{-1}
\nabla L_f(\theta_r)
+ O\!\left(|\mathcal{D}_r^{\mathrm{inst}}|\rho^2\right).
\end{align}

Therefore, task-level and instance-level interference are governed by the same first-order Hessian-preconditioned gradient coupling term, differing only in the aggregation domain. The corollary follows.
\end{proof}

\subsection{Proof of Proposition~\ref{prop:suboptimal}}

\begin{customprop}{\ref{prop:suboptimal}}

\end{customprop}

\begin{proof}
Under the local quadratic approximation,
\[
L_r(\theta_r+\delta)
\approx
L_r(\theta_r)+\frac{1}{2}\delta^\top \mathbf{H}_r\delta.
\]
Since $L_r(\theta_r)$ is constant with respect to $\delta$, minimizing the retain loss is equivalent to solving
\[
\min_{\delta}\ \frac{1}{2}\delta^\top \mathbf{H}_r\delta
\qquad
\text{s.t.}
\qquad
\nabla L_f(\theta_r)^\top \delta = \gamma.
\]
Let $g:=\nabla L_f(\theta_r)$. The problem becomes
\[
\min_{\delta}\ \frac{1}{2}\delta^\top \mathbf{H}_r\delta
\qquad
\text{s.t.}
\qquad
g^\top \delta = \gamma.
\]

We form the Lagrangian
\[
\mathcal{L}(\delta,\lambda)
=
\frac{1}{2}\delta^\top \mathbf{H}_r\delta
-
\lambda(g^\top \delta-\gamma).
\]
Taking the derivative with respect to $\delta$ and setting it to zero gives
\[
\nabla_\delta \mathcal{L}(\delta,\lambda)
=
\mathbf{H}_r\delta-\lambda g
=
0.
\]
Hence,
\[
\delta=\lambda \mathbf{H}_r^{-1}g.
\]
Substituting this into the constraint $g^\top \delta=\gamma$ yields
\[
g^\top(\lambda \mathbf{H}_r^{-1}g)=\gamma,
\]
and therefore
\[
\lambda
=
\frac{\gamma}{g^\top \mathbf{H}_r^{-1}g}.
\]
Thus,
\[
\delta^\star
=
\frac{\gamma}{g^\top \mathbf{H}_r^{-1}g}\mathbf{H}_r^{-1}g
=
\frac{\gamma}
{\nabla L_f(\theta_r)^\top \mathbf{H}_r^{-1}\nabla L_f(\theta_r)}
\mathbf{H}_r^{-1}\nabla L_f(\theta_r).
\]
Since $\mathbf{H}_r\succ 0$, the objective is strictly convex and the solution is unique.

Finally, consider directly following the unlearning gradient, i.e.,
\[
\delta=\alpha g
\]
for some scalar $\alpha$. For this update to coincide with the optimum $\delta^\star$, we must have
\[
\alpha g
\propto
\mathbf{H}_r^{-1}g,
\]
which holds only if
\[
\mathbf{H}_r^{-1}g=\beta g
\]
for some scalar $\beta$, or equivalently,
\[
\mathbf{H}_r g=\frac{1}{\beta}g.
\]
Thus, $g=\nabla L_f(\theta_r)$ must be an eigenvector of $\mathbf{H}_r$. Therefore, directly following the unlearning gradient is generally suboptimal unless $\nabla L_f(\theta_r)$ is an eigenvector of $\mathbf{H}_r$. The proposition follows.
\end{proof}

\subsection{Proof of Theorem~\ref{thm:task_projection}}
\label{proof:task_projection}

\begin{theorem}[Task-level interference reduction]
\label{thm:task_projection}
For any $\mathbf{Z}\in\{\mathbf{A},\mathbf{B}\}$ and any two tasks $t$ and $t'$, let $\nabla_{\mathbf{Z},t}$ and $\nabla_{\mathbf{Z},t'}$ be their task gradients. The task-aware projected gradients defined in Eq.~\eqref{eq:projected_gradients} are $\nabla_{\mathbf{Z},t}^{(t)}=\nabla_{\mathbf{Z},t}\mathbf{P}_{t}$ and $\nabla_{\mathbf{Z},t'}^{(t')}=\nabla_{\mathbf{Z},t'}\mathbf{P}_{t'}$. If $\|\mathbf{U}_{t}^{\top}\mathbf{U}_{t'}\|_2\le \gamma_{t,t'}$, then
\begingroup\small\begin{equation}
\left|
\left\langle
\nabla_{\mathbf{Z},t}^{(t)},
\nabla_{\mathbf{Z},t'}^{(t')}
\right\rangle_F
\right|
\le
\gamma_{t,t'}
\|\nabla_{\mathbf{Z},t}\|_F
\|\nabla_{\mathbf{Z},t'}\|_F,
\end{equation}\endgroup
where $\langle\cdot,\cdot\rangle_F$ and $\|\cdot\|_F$ denote the Frobenius inner product and norm, respectively.
\end{theorem}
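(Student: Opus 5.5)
We work directly with the Frobenius inner product and the factorization $\mathbf{P}_t=\mathbf{U}_t\mathbf{U}_t^\top$. Write $\mathbf{G}:=\nabla_{\mathbf{Z},t}$ and $\mathbf{G}':=\nabla_{\mathbf{Z},t'}$; both have $r$ columns, so right-multiplication by the $r\times r$ projectors is well defined. The first step is the trace expansion
\begin{equation*}
\left\langle \mathbf{G}\mathbf{P}_t,\mathbf{G}'\mathbf{P}_{t'}\right\rangle_F
=\operatorname{tr}\!\left(\mathbf{P}_t^\top\mathbf{G}^\top\mathbf{G}'\mathbf{P}_{t'}\right)
=\operatorname{tr}\!\left(\mathbf{U}_t\mathbf{U}_t^\top\mathbf{G}^\top\mathbf{G}'\mathbf{U}_{t'}\mathbf{U}_{t'}^\top\right).
\end{equation*}
Here we use that $\mathbf{P}_t$ is symmetric. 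Cyclicity of the trace then gives
\begin{equation*}
\left\langle \mathbf{G}\mathbf{P}_t,\mathbf{G}'\mathbf{P}_{t'}\right\rangle_F
=\operatorname{tr}\!\left(\mathbf{U}_{t'}^\top\mathbf{U}_t\,\mathbf{U}_t^\top\mathbf{G}^\top\mathbf{G}'\mathbf{U}_{t'}\right)
=\left\langle \mathbf{G}\mathbf{U}_t\,\mathbf{U}_t^\top\mathbf{U}_{t'},\ \mathbf{G}'\mathbf{U}_{t'}\right\rangle_F .
\end{equation*}
This isolates the cross-alignment matrix $\mathbf{M}:=\mathbf{U}_t^\top\mathbf{U}_{t'}\in\mathbb{R}^{s\times s}$.

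The second step is to apply Cauchy--Schwarz to this inner product and then use the mixed-norm inequality $\|\mathbf{X}\mathbf{Y}\|_F\le\|\mathbf{X}\|_F\|\mathbf{Y}\|_2$. This yields
\begin{equation*}
\left|\left\langle \mathbf{G}\mathbf{P}_t,\mathbf{G}'\mathbf{P}_{t'}\right\rangle_F\right|
\le \|\mathbf{G}\mathbf{U}_t\|_F\,\|\mathbf{M}\|_2\,\|\mathbf{G}'\mathbf{U}_{t'}\|_F .
\end{equation*}
The third step removes the bases from the remaining norms. Since $\mathbf{U}_t$ has orthonormal columns, $\|\mathbf{U}_t\|_2=1$, so $\|\mathbf{G}\mathbf{U}_t\|_F\le\|\mathbf{G}\|_F$; the same holds for $t'$. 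Combining this with the hypothesis $\|\mathbf{M}\|_2\le\gamma_{t,t'}$ gives the claimed bound.

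The only delicate point is placing $\mathbf{M}$ so that its \emph{spectral} norm appears, rather than a Frobenius norm that could be as large as $\sqrt{s}$. The cyclic rearrangement above is chosen for exactly this purpose. We will also note that, by the same argument, the stronger intermediate bound $\|\mathbf{G}\mathbf{P}_t\|_F\,\gamma_{t,t'}\,\|\mathbf{G}'\mathbf{P}_{t'}\|_F$ holds. This stronger form implies the stated one, because $\|\mathbf{G}\mathbf{U}_t\|_F=\|\mathbf{G}\mathbf{P}_t\|_F\le\|\mathbf{G}\|_F$.
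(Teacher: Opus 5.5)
Your proof is correct and follows essentially the paper's route: you write the Frobenius inner product as a trace using symmetry of the projectors, rearrange it cyclically to isolate the cross-task overlap, and then apply Cauchy--Schwarz together with $\|\mathbf{X}\mathbf{Y}\|_F\le\|\mathbf{X}\|_F\|\mathbf{Y}\|_2$ and orthonormality of the bases. The only difference is bookkeeping: the paper keeps $\nabla_{\mathbf{Z},t}$ unprojected and bounds $\|\mathbf{P}_{t'}\mathbf{P}_t\|_2\le\|\mathbf{U}_{t'}^\top\mathbf{U}_t\|_2$, whereas you factor out $\mathbf{U}_t^\top\mathbf{U}_{t'}$ directly and absorb the bases into the gradients, which also gives you the sharper bound $\gamma_{t,t'}\|\nabla_{\mathbf{Z},t}\mathbf{P}_t\|_F\|\nabla_{\mathbf{Z},t'}\mathbf{P}_{t'}\|_F$.
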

\begin{proof}
By definition,
$\nabla_{\mathbf{Z},t}^{(t)}=\nabla_{\mathbf{Z},t}\mathbf{P}_{t}$ and
$\nabla_{\mathbf{Z},t'}^{(t')}=\nabla_{\mathbf{Z},t'}\mathbf{P}_{t'}$. Since $\mathbf{P}_{t}$ and $\mathbf{P}_{t'}$ are symmetric projectors,
\begingroup\small\begin{equation}
\left|
\left\langle
\nabla_{\mathbf{Z},t}^{(t)},
\nabla_{\mathbf{Z},t'}^{(t')}
\right\rangle_F
\right|
=
\left|
\mathrm{tr}\!\left(
\mathbf{P}_{t}
\nabla_{\mathbf{Z},t}^{\top}
\nabla_{\mathbf{Z},t'}
\mathbf{P}_{t'}
\right)
\right|.
\end{equation}\endgroup
By cyclicity of trace and the Frobenius Cauchy--Schwarz inequality,
\begingroup\small\begin{equation}
\left|
\mathrm{tr}\!\left(
\mathbf{P}_{t}
\nabla_{\mathbf{Z},t}^{\top}
\nabla_{\mathbf{Z},t'}
\mathbf{P}_{t'}
\right)
\right|
=
\left|
\left\langle
\nabla_{\mathbf{Z},t},
\nabla_{\mathbf{Z},t'}\mathbf{P}_{t'}\mathbf{P}_{t}
\right\rangle_F
\right|
\le
\|\nabla_{\mathbf{Z},t}\|_F
\|\nabla_{\mathbf{Z},t'}\|_F
\|\mathbf{P}_{t'}\mathbf{P}_{t}\|_2 .
\end{equation}\endgroup
Because $\mathbf{P}_{t}=\mathbf{U}_{t}\mathbf{U}_{t}^{\top}$ and
$\mathbf{P}_{t'}=\mathbf{U}_{t'}\mathbf{U}_{t'}^{\top}$ with orthonormal bases, we have
\begingroup\small\begin{equation}
\|\mathbf{P}_{t'}\mathbf{P}_{t}\|_2
\le
\|\mathbf{U}_{t'}^{\top}\mathbf{U}_{t}\|_2
=
\|\mathbf{U}_{t}^{\top}\mathbf{U}_{t'}\|_2
\le
\gamma_{t,t'}.
\end{equation}\endgroup
Combining the above inequalities proves the claim.
\end{proof}

\subsection{Proof of Theorem~\ref{thm:instance_orthogonalization}}
\label{proof:instance_orthogonalization}

\begin{theorem}[Instance-level interference reduction]
\label{thm:instance_orthogonalization}
For any $\mathbf{Z}\in\{\mathbf{A},\mathbf{B}\}$, let $\nabla_{\mathbf{Z},f}^{\perp}=\Pi_{\nabla_{\mathbf{Z},r}}^{\perp}(\nabla_{\mathbf{Z},f})$ be the orthogonalized forget gradient defined following Eq.~\eqref{eq:orthogonalization}. Then its alignment with the retain gradient satisfies
\begingroup\small\begin{equation}
\left|
\left\langle
\nabla_{\mathbf{Z},f}^{\perp},
\nabla_{\mathbf{Z},r}
\right\rangle_F
\right|
=
\frac{\varepsilon}{\|\nabla_{\mathbf{Z},r}\|_F^2+\varepsilon}
\left|
\left\langle
\nabla_{\mathbf{Z},f},
\nabla_{\mathbf{Z},r}
\right\rangle_F
\right| .
\end{equation}\endgroup
When $\varepsilon=0$, the orthogonalized forget gradient is exactly orthogonal to the retain gradient.
\end{theorem}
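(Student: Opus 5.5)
The plan is a direct computation that exploits the bilinearity of the Frobenius inner product. Abbreviate $g_f := \nabla_{\mathbf{Z},f}$ and $g_r := \nabla_{\mathbf{Z},r}$, and set $c := \langle g_f, g_r\rangle_F/(\|g_r\|_F^2+\varepsilon)$. By Eq.~\eqref{eq:orthogonalization}, the orthogonalized forget gradient is $\nabla_{\mathbf{Z},f}^{\perp} = g_f - c\, g_r$.

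First, take the Frobenius inner product of this expression with $g_r$ and expand linearly:
\begin{equation*}
\langle \nabla_{\mathbf{Z},f}^{\perp}, g_r\rangle_F = \langle g_f, g_r\rangle_F - c\,\|g_r\|_F^2 = \langle g_f, g_r\rangle_F\Bigl(1 - \frac{\|g_r\|_F^2}{\|g_r\|_F^2+\varepsilon}\Bigr) = \frac{\varepsilon}{\|g_r\|_F^2+\varepsilon}\,\langle g_f, g_r\rangle_F .
\end{equation*}
Second, take absolute values. The prefactor $\varepsilon/(\|g_r\|_F^2+\varepsilon)$ is nonnegative because $\varepsilon>0$, so it passes through the absolute value unchanged, and this gives the stated identity.

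For the case $\varepsilon=0$, the prefactor vanishes, provided $g_r\neq 0$ so that the projection in Eq.~\eqref{eq:orthogonalization} is well defined. The inner product is then exactly zero, which is the claimed orthogonality. If $g_r = 0$, orthogonality holds trivially since $\langle \cdot, 0\rangle_F = 0$, although the $\varepsilon=0$ formula itself is undefined there. I will note this degenerate case explicitly.

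The only point needing care, which is the closest thing to an obstacle, is the $\varepsilon=0$ limit: the well-definedness of the projection must be stated separately from the main identity. Otherwise the argument is a single line of algebra. I will also remark that the same reasoning applies to each factor of the sequential scheme in Eq.~\eqref{eq:retain}, but only with respect to the most recently applied retain component. This remark is not required by the statement.
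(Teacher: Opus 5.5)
Your proposal is correct and follows essentially the same route as the paper: expand $\langle \nabla_{\mathbf{Z},f}^{\perp}, \nabla_{\mathbf{Z},r}\rangle_F$ bilinearly, factor out $\varepsilon/(\|\nabla_{\mathbf{Z},r}\|_F^2+\varepsilon)$, and take absolute values. Your point that the $\varepsilon=0$ case needs $\nabla_{\mathbf{Z},r}\neq 0$ for the projection to be well defined is a small refinement that the paper omits.
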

\begin{proof}
By the definition of the orthogonalized forget gradient,
\begingroup\small\begin{equation}
\nabla_{\mathbf{Z},f}^{\perp}
=
\nabla_{\mathbf{Z},f}
-
\frac{
\left\langle
\nabla_{\mathbf{Z},f},
\nabla_{\mathbf{Z},r}
\right\rangle_F
}{
\|\nabla_{\mathbf{Z},r}\|_F^2+\varepsilon
}
\nabla_{\mathbf{Z},r}.
\end{equation}\endgroup
Taking its Frobenius inner product with $\nabla_{\mathbf{Z},r}$ gives
\begingroup\small\begin{equation}
\begin{aligned}
\left\langle
\nabla_{\mathbf{Z},f}^{\perp},
\nabla_{\mathbf{Z},r}
\right\rangle_F
&=
\left\langle
\nabla_{\mathbf{Z},f},
\nabla_{\mathbf{Z},r}
\right\rangle_F
-
\frac{
\left\langle
\nabla_{\mathbf{Z},f},
\nabla_{\mathbf{Z},r}
\right\rangle_F
}{
\|\nabla_{\mathbf{Z},r}\|_F^2+\varepsilon
}
\left\langle
\nabla_{\mathbf{Z},r},
\nabla_{\mathbf{Z},r}
\right\rangle_F \\
&=
\frac{\varepsilon}{\|\nabla_{\mathbf{Z},r}\|_F^2+\varepsilon}
\left\langle
\nabla_{\mathbf{Z},f},
\nabla_{\mathbf{Z},r}
\right\rangle_F .
\end{aligned}
\end{equation}\endgroup
Taking absolute values yields the stated result. When $\varepsilon=0$, the right-hand side becomes zero, so $\nabla_{\mathbf{Z},f}^{\perp}$ is orthogonal to $\nabla_{\mathbf{Z},r}$.
\end{proof}

\section{Additional Experimental Results}

\subsection{Detailed Setup}

\subsubsection{Datasets}
\label{sec:exp_datasets}

\vparagraph{Benchmark.}
To evaluate multi-task unlearning under diverse supervision granularities, we use two representative vision benchmarks spanning \emph{image-level}, \emph{instance-level}, and \emph{pixel-level} prediction.

We first use NYUv2~\cite{silberman2012indoor} as a dense multi-task benchmark with pixel-level supervision. NYUv2 consists of indoor RGB-D images annotated with aligned pixel-wise labels, enabling multiple dense prediction tasks to be defined on the same input. In our setting, we consider three tasks: semantic segmentation (SEG), depth estimation (DEP), and surface-normal prediction (NOR), which share the same input images but differ in output structure and loss formulation. This benchmark allows us to study unlearning behavior under tightly coupled multi-task supervision.

We further use PASCAL VOC~\cite{Everingham10} to cover heterogeneous supervision across image-level and instance-level tasks. Specifically, we consider multi-label image classification (CLS) and object detection (OD), which differ in both annotation granularity and prediction format. Unlike NYUv2, these tasks involve partially shared supervision and different annotation scopes, providing a complementary setting to evaluate unlearning under loosely coupled tasks.

Together, NYUv2 and PASCAL VOC form a comprehensive testbed for multi-task unlearning, covering diverse task semantics, output structures, and supervision granularities, ranging from dense pixel-wise prediction to sparse instance- and image-level recognition.

\vparagraph{Dataset partitions.} We follow the official train/test splits of each benchmark~\cite{Everingham10,silberman2012indoor}. Within the training split, we further divide the data into disjoint retain (RET) and unlearn sets (UNL) to simulate realistic unlearning requests, where the unlearn set should be removed while the retain set should be preserved. The official test split (VAL) is kept held out for evaluation.

The \emph{retain set} contains supervision that should remain intact after unlearning, and is used to evaluate whether useful knowledge is preserved. The \emph{unlearn set} contains the target data to be forgotten, and is used to measure the effectiveness of forgetting. The \emph{validation set} is disjoint from both retain and unlearn sets, and is used to assess generalization beyond the directly affected training instances.

This partitioning enables a unified evaluation of the trade-offs between retention, forgetting, and generalization, which are inherently coupled in multi-task unlearning.

\subsubsection{Baselines}

We compare the proposed method with representative unlearning baselines from different methodological families:
\begin{itemize}[leftmargin=2em]
    \item \textbf{Original} denotes the model trained on the full training set, including both the retain and forget sets. It serves as the pre-unlearning reference, especially for supervision that should remain unchanged under partial-task unlearning.

    \item \textbf{Retrain} retrains the model from scratch using only the retain set. It serves as the ideal unlearning target and an upper-bound reference, albeit at substantially higher computational cost.

    \item \textbf{NegGrad+}~\cite{kurmanji2023towards,graves2021amnesiac} performs gradient ascent on the forget set while incorporating retain-side optimization to reduce damage to retained data.

    \item \textbf{Fisher}~\cite{golatkar2020eternal} estimates parameter importance via Fisher information and selectively perturbs parameters associated with the forget data.

    \item \textbf{Influence}~\cite{guo2019certified} applies influence-function-based updates to estimate and remove the effect of forget examples.

    \item \textbf{SSD}~\cite{foster2024fast} selectively dampens parameters strongly associated with the forget data without requiring full retraining.

    \item \textbf{OrthoGrad}~\cite{shamsian2025go} orthogonalizes forget updates against retained gradients to better preserve retained performance.

    \item \textbf{SCRUB}~\cite{kurmanji2023towards} combines forgetting and retention via a teacher-student framework.
\end{itemize}

These baselines cover major approximate unlearning strategies, including retraining-based references, first-order gradient updates, second-order methods based on influence functions or Fisher information, parameter-dampening approaches, and gradient-projection mechanisms.

\subsubsection{Backbones}
We evaluate our framework with two representative vision backbones: \textbf{ViT-L}~\cite{dosovitskiy2021an} and \textbf{Swin-L}~\cite{liu2021swin}. ViT-L follows the standard Vision Transformer architecture, which represents images as patch tokens and captures global dependencies through self-attention. It provides a strong fully attention-based backbone for evaluating unlearning under shared representations. Swin-L adopts a hierarchical Transformer design with shifted-window attention, enabling efficient local-to-global feature aggregation and strong performance on dense prediction tasks. Using both backbones allows us to examine whether the proposed unlearning framework remains effective across different Transformer architectures, from global patch-based attention to hierarchical window-based attention.

\subsubsection{Evaluation Metrics}

\vparagraph{Task utility metrics.}
We adopt standard evaluation metrics tailored to each task, covering pixel-level, geometric, and instance-level prediction quality.

\textit{Semantic Segmentation (mIoU).}
For semantic segmentation, we use mean Intersection-over-Union (mIoU)~\cite{long2015fully}. Let $C$ denote the number of classes, and let $\mathrm{TP}_c$, $\mathrm{FP}_c$, and $\mathrm{FN}_c$ denote the number of true positives, false positives, and false negatives for class $c$, respectively. The IoU for class $c$ is defined as
\begin{equation}
\mathrm{IoU}_c
=
\frac{\mathrm{TP}_c}{\mathrm{TP}_c + \mathrm{FP}_c + \mathrm{FN}_c},
\end{equation}
and the mean IoU is given by
\begin{equation}
\mathrm{mIoU}
=
\frac{1}{C} \sum_{c=1}^{C} \mathrm{IoU}_c.
\end{equation}

\textit{Depth Estimation (Threshold Accuracy).}
For depth estimation, we use Threshold Accuracy~\cite{silberman2012indoor}. Let $d_i$ and $\hat{d}_i$ denote the ground-truth and predicted depth at pixel $i$, respectively. For a threshold $\delta$, the accuracy is defined as
\begin{equation}
\mathrm{Acc}(\delta)
=
\frac{1}{N}
\sum_{i=1}^{N}
\mathbf{1}
\left(
\max\left(
\frac{\hat{d}_i}{d_i},
\frac{d_i}{\hat{d}_i}
\right)
< \delta
\right),
\end{equation}
where $N$ is the total number of pixels. Common choices include $\delta \in \{1.25, 1.25^2, 1.25^3\}$.

\textit{Surface Normal Prediction (Angular Accuracy).}
For surface-normal prediction, we measure Angular Accuracy~\cite{silberman2012indoor}. Let $\mathbf{n}_i$ and $\hat{\mathbf{n}}_i$ denote the ground-truth and predicted unit normal vectors at pixel $i$. The angular error is
\begin{equation}
\theta_i
=
\arccos\left(
\hat{\mathbf{n}}_i^\top \mathbf{n}_i
\right),
\end{equation}
and the accuracy under a threshold $\tau$ is defined as
\begin{equation}
\mathrm{Acc}(\tau)
=
\frac{1}{N}
\sum_{i=1}^{N}
\mathbf{1}(\theta_i < \tau),
\end{equation}
where $\tau$ is typically set to $11.25^\circ$, $22.5^\circ$, or $30^\circ$.

\textit{Object Detection and Classification (mAP).}
For object detection and multi-label classification, we use mean Average Precision (mAP)~\cite{lin2014microsoft,Everingham10}. For each class $c$, predictions are ranked by confidence scores, and precision–recall pairs are computed. The Average Precision (AP) for class $c$ is defined as the area under the precision–recall curve,
\begin{equation}
\mathrm{AP}_c
=
\int_{0}^{1} P_c(r)\, dr,
\end{equation}
where $P_c(r)$ denotes precision as a function of recall $r$. The mean Average Precision is then
\begin{equation}
\mathrm{mAP}
=
\frac{1}{C}
\sum_{c=1}^{C}
\mathrm{AP}_c.
\end{equation}
For object detection, AP is computed based on Intersection-over-Union (IoU) thresholds between predicted and ground-truth bounding boxes, following standard protocols.

\vparagraph{Privacy Metric: Membership Inference Attack.}
To assess residual memorization after unlearning, we evaluate task-wise Membership Inference
Attack (MIA)~\cite{shokri2017membership,yeom2018privacy,song2021systematic}. Given a trained model, an adversary aims to determine
whether a sample $(\mathbf{x}, y)$ was used during training. In our evaluation, we adopt a
loss-based MIA rather than training an additional attack classifier. Specifically, for each task
$t$, we compute the per-sample task loss $\ell^{(t)}(\mathbf{x}, y^{(t)})$ and use its negative
value as the membership score:
\[
s^{(t)}(\mathbf{x}) = -\ell^{(t)}(\mathbf{x}, y^{(t)}).
\]
This follows the intuition that member samples typically obtain lower losses and therefore higher
membership scores.

For each task, we evaluate two attacks: \textit{retain-vs-val}, which treats retain samples as
members and validation samples as non-members, and \textit{unlearn-vs-val}, which treats unlearn
samples as members and validation samples as non-members. We use \textit{unlearn-vs-val} as the
main privacy metric for measuring residual membership leakage from the forgotten data, while
\textit{retain-vs-val} is recorded to monitor the membership behavior of retained samples. We report
ROC-AUC as the MIA score and additionally record average precision (AP). Lower MIA AUC, or an AUC
closer to the retrained reference, indicates that the unlearned model leaks less membership
information about the forgotten samples.

\subsubsection{Unlearning Impact Score (UIS)}
While individual metrics capture task-specific performance, evaluating multi-task unlearning holistically is inherently challenging due to competing objectives, including retaining useful knowledge, forgetting target data, preserving generalization, and mitigating memorization. To provide a unified evaluation criterion, we follow prior work~\cite{tu2024towards, cheng2024remaining, ahmed2025towards, shamsian2025go} and assess each method against a desired reference behavior.

Specifically, forgotten tasks are expected to match the retrained model, while retained tasks should remain consistent with the original pre-unlearning model. Let $s_t^{*}$ denote the metric value of the evaluated unlearning model for task $t$, and let $\bar{s}_t^{*}$ denote the corresponding reference metric. For forgotten tasks $\mathcal{T}_f$, the reference is taken from the retrained model, whereas for retained tasks $\mathcal{T}_r$, the reference is taken from the original model.

To jointly capture performance across different evaluation dimensions, including retention, forgetting, generalization, and privacy, we define the task-level discrepancy as
\begin{equation}
s_t
=
\frac{|s^{\mathrm{retain}}_t - \bar{s}^{\mathrm{retain}}_t|}{\bar{s}^{\mathrm{retain}}_t}
+
\frac{|s^{\mathrm{unlearn}}_t - \bar{s}^{\mathrm{unlearn}}_t|}{\bar{s}^{\mathrm{unlearn}}_t}
+
\frac{|s^{\mathrm{val}}_t - \bar{s}^{\mathrm{val}}_t|}{\bar{s}^{\mathrm{val}}_t}
+
\frac{|s^{\mathrm{MIA}}_t - \bar{s}^{\mathrm{MIA}}_t|}{\bar{s}^{\mathrm{MIA}}_t}.
\end{equation}
This formulation normalizes deviations across heterogeneous metrics, ensuring comparability across tasks and evaluation criteria.

The overall discrepancy score is then defined as the average across all tasks,
\begin{equation}
S
=
\frac{1}{|\mathcal{T}|}
\sum_{t \in \mathcal{T}} s_t.
\end{equation}
A smaller value of $S$ indicates that the unlearned model better aligns with the desired behavior, achieving effective forgetting while preserving utility, generalization, and privacy.

Finally, the reference construction adapts to different unlearning settings. For full-task unlearning, where $\mathcal{T}_f = \mathcal{T}$ and $\mathcal{T}_r = \emptyset$, all tasks are compared against the retrained model. For partial-task unlearning, let $t_f$ denote the forgotten task, then $\mathcal{T}_f = \{t_f\}$ and $\mathcal{T}_r = \mathcal{T} \setminus \{t_f\}$. In this case, the forgotten task uses the retrained reference, while each retained task is compared to the original model.

\subsubsection{Implementation Details}
We instantiate our framework on two shared vision transformer backbones, ViT-L~\cite{dosovitskiy2021an} and Swin-L~\cite{liu2021swin}, both initialized from official HuggingFace checkpoints~\cite{wolf2019huggingface} pretrained on ImageNet~\cite{deng2009imagenet}. For NYUv2, we augment the backbone with an Atrous Spatial Pyramid Pooling (ASPP) module~\cite{chen2017rethinking} and attach task-specific heads for semantic segmentation, depth estimation, and surface-normal prediction. For PASCAL VOC, we use a linear head for image classification and a DETR-style detection head with 100 object queries~\cite{carion2020end}. To reflect a realistic shared-parameter multi-task setting, we use a single shared LoRA with rank 16~\cite{huang2023lorahub} across all tasks on the same dataset, making unlearning particularly challenging due to the coupling between forgetting updates and retained supervision in the shared adaptation space.

Following prior unlearning protocols~\cite{kurmanji2023towards,shamsian2025go}, we designate 10\% of the training instances as the unlearn set and use the remaining training data as the retain set. During unlearning, we further sample 10\% of the retain set as an anchor subset to preserve retain-set performance. All compared methods, including our proposed approach and all baselines, use exactly the same unlearn/retain split and anchor subset for a fair comparison. To reduce sampling bias, we repeat the random unlearning split 10 times and report the averaged results across runs, using the same sampled splits for all baselines. Before unlearning, we train the target model to be unlearned for 15 epochs on NYUv2 and 50 epochs on Pascal for all compared methods. Unless otherwise specified, unlearning is performed by optimizing the trainable low-rank update
parameters with AdamW~\cite{loshchilov2017decoupled} for up to 20 epochs. We apply early stopping based on the membership inference attack (MIA) score and select the
checkpoint whose MIA performance is closest to the retrained reference, indicating that the model can no longer reliably distinguish the
unlearn set from held-out validation data. We use a learning rate of $1\times10^{-4}$ with $\eta_1=1$ and $\eta_2=0.1$ for partial unlearning on segmentation, depth, normal, detection, and full-unlearning settings. and a learning rate of $3\times10^{-4}$ with $\eta_1=1$ and $\eta_2=0.1$ for partial unlearning on classification. We provide the accompanying source code with preprocessing, training, and evaluation to facilitate reproducibility. All experiments were conducted on a workstation equipped with a single NVIDIA GeForce RTX 4090 GPU with 24GB of memory.
\subsection{Additional Experiments on Swin-L}
\label{apx:swin}

\begin{table*}[t]
\centering
\small
\caption{
Quantitative results of multi-task unlearning on NYUv2 using Swin-L}
\label{tab:nyu_unlearning_swin}

\resizebox{\textwidth}{!}{
\begin{tabular}{llccccccccccccc}
\toprule
\multirow{3}{*}{Setting} & \multirow{3}{*}{Method}
& \multicolumn{4}{c}{\textbf{SEG}}
& \multicolumn{4}{c}{\textbf{DEP}}
& \multicolumn{4}{c}{\textbf{NOR}}
& \multirow{3}{*}{\textbf{UIS}$\downarrow$} \\
\cmidrule(lr){3-6} \cmidrule(lr){7-10} \cmidrule(lr){11-14}
&
& \textsc{Ret} & \textsc{Unl} & \textsc{Val} & \textsc{MIA}
& \textsc{Ret} & \textsc{Unl} & \textsc{Val} & \textsc{MIA}
& \textsc{Ret} & \textsc{Unl} & \textsc{Val} & \textsc{MIA}
& \\
&
& (mIoU) & (mIoU) & (mIoU) & (AUC)
& ($\sigma_{1.25}$) & ($\sigma_{1.25}$) & ($\sigma_{1.25}$) & (AUC)
& (A$_{30}$) & (A$_{30}$) & (A$_{30}$) & (AUC)
& \\
\midrule

\multirow{2}{*}{\shortstack{Ref.}}
& Original
& 0.8813 & 0.8759 & 0.7330 & 0.8918
& 0.8428 & 0.8331 & 0.7291 & 0.6067
& 0.5803 & 0.5776 & 0.5391 & 0.5519
& -- \\
& Retrain
& 0.8630 & 0.7312 & 0.7293 & 0.5295
& 0.8387 & 0.6854 & 0.7127 & 0.4283
& 0.5915 & 0.5242 & 0.5425 & 0.4239
& -- \\
\midrule

\multirow{7}{*}{\shortstack{FU}}
& NegGrad+
& 0.8321 & 0.7205 & 0.6819 & 0.5363
& 0.8011 & 0.6953 & 0.6722 & 0.4502
& 0.4955 & 0.3996 & 0.4291 & 0.4375
& 31.2\% \\
& Fisher
& 0.2855 & 0.2614 & 0.2583 & 0.5480
& 0.5879 & 0.5643 & 0.5360 & 0.4676
& 0.3701 & 0.3562 & 0.3426 & 0.4776
& 133.3\% \\
& Influence
& 0.7521 & 0.7276 & 0.6638 & 0.6583
& 0.2678 & 0.2418 & 0.2739 & 0.4319
& 0.4039 & 0.3996 & 0.3867 & 0.4902
& 113.9\% \\
& SSD
& 0.6388 & 0.6328 & 0.5836 & 0.6319
& 0.1837 & 0.1929 & 0.1938 & 0.4259
& 0.3396 & 0.3317 & 0.3022 & 0.5281
& 150.1\% \\
& OrthoGrad
& 0.8457 & 0.7102 & 0.6818 & 0.5235
& 0.7883 & 0.5270 & 0.5938 & 0.4197
& 0.5093 & 0.3508 & 0.3985 & 0.4107
& 45.7\% \\
& SCRUB
& 0.7685 & 0.6949 & 0.6653 & 0.5045
& 0.7397 & 0.7032 & 0.6328 & 0.4548
& 0.4806 & 0.4229 & 0.4251 & 0.4624
& 43.3\% \\
& Ours
& 0.8217 & 0.7012 & 0.6709 & 0.5483
& 0.8122 & 0.6775 & 0.6826 & 0.4213
& 0.5682 & 0.5016 & 0.5211 & 0.4285
& \textbf{14.6\%} \\
\midrule

\multirow{7}{*}{\shortstack{PU\\(SEG)}}
& NegGrad+
& 0.7682 & 0.8329 & 0.6285 & 0.5291
& 0.7745 & 0.7712 & 0.6439 & 0.5380
& 0.4930 & 0.4631 & 0.4358 & 0.5407
& 44.5\% \\
& Fisher
& 0.3098 & 0.2716 & 0.2535 & 0.5581
& 0.6667 & 0.6520 & 0.5896 & 0.5012
& 0.4683 & 0.4401 & 0.4041 & 0.5233
& 116.7\% \\
& Influence
& 0.3628 & 0.3557 & 0.3214 & 0.5622
& 0.7002 & 0.6834 & 0.6039 & 0.5036
& 0.4632 & 0.4470 & 0.4083 & 0.5271
& 104.0\% \\
& SSD
& 0.6093 & 0.6007 & 0.5631 & 0.6139
& 0.1323 & 0.1570 & 0.1482 & 0.4238
& 0.3182 & 0.3221 & 0.2907 & 0.5247
& 167.2\% \\
& OrthoGrad
& 0.4117 & 0.3556 & 0.3218 & 0.5328
& 0.7881 & 0.7357 & 0.6426 & 0.5763
& 0.5092 & 0.4768 & 0.4239 & 0.5737
& 83.4\% \\
& SCRUB
& 0.6194 & 0.5249 & 0.5193 & 0.5059
& 0.6879 & 0.6875 & 0.5839 & 0.5076
& 0.4786 & 0.4573 & 0.4138 & 0.5485
& 74.7\% \\
& Ours
& 0.8193 & 0.7003 & 0.6719 & 0.5535
& 0.8312 & 0.8427 & 0.7183 & 0.5796
& 0.5712 & 0.5928 & 0.5293 & 0.5204
& \textbf{14.0\%} \\
\midrule

\multirow{7}{*}{\shortstack{PU\\(DEP)}}
& NegGrad+
& 0.8621 & 0.8633 & 0.7305 & 0.8837
& 0.7920 & 0.6775 & 0.6739 & 0.4196
& 0.5194 & 0.4903 & 0.4412 & 0.5632
& 21.6\% \\
& Fisher
& 0.8690 & 0.8632 & 0.7289 & 0.8823
& 0.2019 & 0.1933 & 0.2649 & 0.4227
& 0.5093 & 0.4877 & 0.4372 & 0.5541
& 87.8\% \\
& Influence
& 0.7705 & 0.7651 & 0.6638 & 0.7509
& 0.3688 & 0.3201 & 0.3529 & 0.4283
& 0.4418 & 0.4286 & 0.4037 & 0.5420
& 95.6\% \\
& SSD
& 0.6183 & 0.6117 & 0.5628 & 0.6248
& 0.1592 & 0.1438 & 0.1538 & 0.4273
& 0.3193 & 0.3180 & 0.2983 & 0.5255
& 163.7\% \\
& OrthoGrad
& 0.8616 & 0.8640 & 0.7215 & 0.8956
& 0.5681 & 0.3623 & 0.4298 & 0.4237
& 0.5146 & 0.4908 & 0.4467 & 0.5975
& 59.2\% \\
& SCRUB
& 0.7327 & 0.7186 & 0.6545 & 0.5358
& 0.7996 & 0.8321 & 0.8063 & 0.4402
& 0.3477 & 0.3051 & 0.3068 & 0.4960
& 89.3\% \\
& Ours
& 0.8728 & 0.8806 & 0.7324 & 0.9316
& 0.8192 & 0.7029 & 0.6939 & 0.4319
& 0.5764 & 0.5602 & 0.5288 & 0.5118
& \textbf{9.1\%} \\
\midrule

\multirow{7}{*}{\shortstack{PU\\(NOR)}}
& NegGrad+
& 0.8609 & 0.8557 & 0.7362 & 0.8851
& 0.8114 & 0.8007 & 0.6849 & 0.5376
& 0.4559 & 0.2680 & 0.3927 & 0.4120
& 44.4\% \\
& Fisher
& 0.8537 & 0.8522 & 0.7293 & 0.8720
& 0.6291 & 0.6005 & 0.5492 & 0.4958
& 0.1629 & 0.1588 & 0.1638 & 0.4783
& 109.9\% \\
& Influence
& 0.8581 & 0.8500 & 0.7223 & 0.8522
& 0.7945 & 0.8081 & 0.6832 & 0.5023
& 0.4425 & 0.4180 & 0.4192 & 0.4832
& 42.0\% \\
& SSD
& 0.6028 & 0.6013 & 0.5640 & 0.6185
& 0.1382 & 0.1420 & 0.1438 & 0.4249
& 0.3182 & 0.3176 & 0.2830 & 0.5247
& 183.6\% \\
& OrthoGrad
& 0.8625 & 0.8519 & 0.7201 & 0.8837
& 0.8197 & 0.7622 & 0.6729 & 0.5845
& 0.1759 & 0.0687 & 0.1005 & 0.4180
& 90.1\% \\
& SCRUB
& 0.8255 & 0.8167 & 0.7021 & 0.8187
& 0.5764 & 0.5971 & 0.5239 & 0.4896
& 0.3430 & 0.2951 & 0.3258 & 0.4096
& 87.3\% \\
& Ours
& 0.8772 & 0.8925 & 0.7328 & 0.9468
& 0.8383 & 0.8506 & 0.7138 & 0.5923
& 0.5584 & 0.5033 & 0.5102 & 0.4268
& \textbf{10.6\%} \\
\bottomrule
\end{tabular}
}
\end{table*}
Table~\ref{tab:nyu_unlearning_swin} further evaluates whether the proposed interference-aware framework generalizes beyond ViT-L by replacing the backbone with Swin-L~\cite{liu2021swin}. Overall, our method consistently achieves the lowest UIS across all full-task and partial-task settings, reducing FU UIS from the strongest baseline at $31.2\%$ to $14.6\%$, and reducing the average PU UIS from $36.0\%$ to $11.2\%$, corresponding to a $68.8\%$ relative reduction. These results show that the proposed task-aware projection and instance-level orthogonalization are not specific to a particular transformer backbone.

Compared with ViT-L, Swin-L exhibits a slightly different unlearning behavior. Due to its hierarchical window-based design, Swin-L tends to produce more localized representations, which can benefit dense prediction tasks but also makes forgetting updates more sensitive to local spatial features. As a result, several baselines show larger degradation on retained segmentation (SEG) and normal (NORMAL) performance, especially under PU settings, indicating stronger interference when the forgotten task shares local spatial structures with retained tasks. In contrast, ViT-L uses global self-attention over image patches, leading to more globally mixed representations; this makes cross-task interference more diffuse but sometimes less catastrophic for individual dense tasks. Despite these architectural differences, our method remains stable on both backbones, suggesting that explicitly separating task-specific update directions and orthogonalizing retain-conflicting forget gradients is effective for both global-token and hierarchical-window representations.

\clearpage
\section{Pseudocode}\label{apx:code}
\begin{algorithm}[h]
\caption{Multi-Task Unlearning}
\label{alg:mtu}
\begin{algorithmic}[1]
\Require Pretrained weight $\mathbf{W}^\star$, forget set $\mathcal{D}_f$, retain subsets $\mathcal{D}_r^{\mathrm{clean}}$, $\mathcal{D}_r^{\mathrm{inst}}$, $\mathcal{D}_r^{\mathrm{task}}$, task set $\mathcal{T}$, learning rates $\eta_1,\eta_2$, rank $r$, subspace dimension $s$, stability constant $\varepsilon$
\Ensure Unlearned weight $\widetilde{\mathbf{W}}$

\State Initialize low-rank factors $\mathbf{A}\in\mathbb{R}^{k\times r}$ and $\mathbf{B}\in\mathbb{R}^{d\times r}$
\State Initialize task bases $\{\mathbf{U}_t\in\mathbb{R}^{r\times s}\}_{t\in\mathcal{T}}$ and projectors $\mathbf{P}_t=\mathbf{U}_t\mathbf{U}_t^\top$
\State Keep $\mathbf{W}^\star$ frozen

\For{each unlearning iteration}
    \State Sample mini-batches from $\mathcal{D}_f$, $\mathcal{D}_r^{\mathrm{clean}}$, $\mathcal{D}_r^{\mathrm{inst}}$, and $\mathcal{D}_r^{\mathrm{task}}$
    \For{each involved task $t$}
        \State Compute forget gradients $\nabla_{\mathbf{A},f}$ and $\nabla_{\mathbf{B},f}$ from $\mathcal{D}_f$
        \State Compute retain gradients $\nabla_{\mathbf{A},r}^{\mathrm{clean}}$, $\nabla_{\mathbf{A},r}^{\mathrm{inst}}$, $\nabla_{\mathbf{A},r}^{\mathrm{task}}$
        \State Compute retain gradients $\nabla_{\mathbf{B},r}^{\mathrm{clean}}$, $\nabla_{\mathbf{B},r}^{\mathrm{inst}}$, $\nabla_{\mathbf{B},r}^{\mathrm{task}}$

        \State Project all gradients using $\mathbf{P}_t$ as in Eq.~\eqref{eq:projected_gradients}

        \State Orthogonalize $\nabla_{\mathbf{A},f}$ sequentially:
        \[
        \nabla_{\mathbf{A},f}^{\perp}
        =
        \Pi_{\nabla_{\mathbf{A},r}^{\mathrm{task}}}^{\perp}
        \Bigl(
        \Pi_{\nabla_{\mathbf{A},r}^{\mathrm{inst}}}^{\perp}
        \bigl(
        \Pi_{\nabla_{\mathbf{A},r}^{\mathrm{clean}}}^{\perp}
        (\nabla_{\mathbf{A},f})
        \bigr)
        \Bigr)
        \]

        \State Orthogonalize $\nabla_{\mathbf{B},f}$ sequentially:
        \[
        \nabla_{\mathbf{B},f}^{\perp}
        =
        \Pi_{\nabla_{\mathbf{B},r}^{\mathrm{task}}}^{\perp}
        \Bigl(
        \Pi_{\nabla_{\mathbf{B},r}^{\mathrm{inst}}}^{\perp}
        \bigl(
        \Pi_{\nabla_{\mathbf{B},r}^{\mathrm{clean}}}^{\perp}
        (\nabla_{\mathbf{B},f})
        \bigr)
        \Bigr)
        \]

        \State Aggregate retain gradients:
        \[
        \nabla_{\mathbf{A},r}
        =
        \nabla_{\mathbf{A},r}^{\mathrm{clean}}
        +
        \nabla_{\mathbf{A},r}^{\mathrm{inst}}
        +
        \nabla_{\mathbf{A},r}^{\mathrm{task}},
        \quad
        \nabla_{\mathbf{B},r}
        =
        \nabla_{\mathbf{B},r}^{\mathrm{clean}}
        +
        \nabla_{\mathbf{B},r}^{\mathrm{inst}}
        +
        \nabla_{\mathbf{B},r}^{\mathrm{task}}
        \]

        \State Update $\mathbf{A}$ and $\mathbf{B}$ as in Eq.~\eqref{eq:parameter_update}
    \EndFor

    \State Regularize task subspaces by minimizing $\sum_{t\neq t'}\|\mathbf{U}_t^\top\mathbf{U}_{t'}\|_F^2$
\EndFor

\State Merge the low-rank update: $\widetilde{\mathbf{W}}=\mathbf{W}^\star+\mathbf{B}\mathbf{A}^\top$
\State \Return $\widetilde{\mathbf{W}}$
\end{algorithmic}
\end{algorithm}
% \clearpage
% \input{lib/check_list}
\end{document}